%% file: main.tex
\documentclass[preprint]{article}

\PassOptionsToPackage{numbers, compress}{natbib}

\usepackage[eandd]{neurips_2026}

\usepackage[utf8]{inputenc} 
\usepackage[T1]{fontenc}    
\usepackage{hyperref}       
\usepackage{url}            
\usepackage{booktabs}       
\usepackage{amsfonts}       
\usepackage{nicefrac}       
\usepackage{microtype}      
\usepackage{xcolor}         
\input{math_commands}

\input{packages}

\input{macros}

\title{Certificate-Guided Evaluation of Reinforcement Learning Generalization}

\author{%
Vignesh Subramanian$^{1}$,
\DJ{}or\dj{}e \v{Z}ikeli\'c$^{2}$,
Suguman Bansal$^{1}$\\[0.4em]
$^{1}$School of Computer Science, Georgia Institute of Technology, Atlanta, GA, USA\\
$^{2}$School of Computing and Information Systems, Singapore Management University, Singapore\\[0.4em]
$^{1}$\texttt{\{vignesh,suguman\}@gatech.edu},
$^{2}$\texttt{dzikelic@smu.edu.sg}
}

\begin{document}
\maketitle

\begin{abstract}
This work presents a logic-driven framework to evaluate the performance of reinforcement learning (RL) algorithms in their ability to generalize to unseen tasks. Our framework defines a family of inductive reach-avoid tasks, characterized by structural similarities in task dynamics, enabling evaluation of generalization capabilities. We introduce a neural certificate function that validates trajectories generated by RL algorithms by enforcing key conditions, thereby serving as a litmus test for RL generalization. We empirically demonstrate our method's capability in certifying generalization for several state-of-the-art generalizable RL algorithms on challenging continuous environments. 
Our results show that a lower percentage of certificate function violations correlates with a higher number of test tasks successfully solved, highlighting the effectiveness of our framework in evaluating and distinguishing generalization capabilities of RL algorithms. This work provides a principled approach for benchmarking RL generalization.
\end{abstract}

\input{Section/new_intro}
\input{Section/prelims}

\input{Section/experiments}

\input{Section/conclusion}

\clearpage
\bibliographystyle{bib}
\bibliography{references}

\newpage
\appendix

\input{Appendix/main_appendix}


\end{document}

%% file: math_commands.tex
\usepackage{amsmath,amsfonts,bm}
\usepackage{adjustbox}

\def\eqref#1{equation~\ref{#1}}

\def\1{\bm{1}}

\DeclareMathAlphabet{\mathsfit}{\encodingdefault}{\sfdefault}{m}{sl}
\SetMathAlphabet{\mathsfit}{bold}{\encodingdefault}{\sfdefault}{bx}{n}



%% file: packages.tex
\usepackage{microtype}
\usepackage{graphicx}
\usepackage{booktabs} 

\usepackage{natbib}
\usepackage{multicol}

\usepackage{amsmath}
\usepackage{amssymb}
\usepackage{mathtools}
\usepackage{amsthm}
\usepackage{algorithm}
\usepackage{algorithmic}
\usepackage[capitalize,noabbrev]{cleveref}
\usepackage{url}
\usepackage{mathtools}
\usepackage{float}
\usepackage{placeins}
\usepackage{titletoc}
\usepackage{makecell}

\usepackage{amsfonts}  
\usepackage{nicefrac}      

\usepackage[labelsep=period, labelfont=it, textfont=normal]{caption}
\usepackage{color}
\usepackage{enumerate}
\usepackage{enumitem}
\usepackage{stmaryrd}
\usepackage{stmaryrd}

\usepackage{subcaption}
\usepackage{xspace}

\usepackage{comment}
\usepackage{xcolor}
\usepackage{multirow}
\usepackage{array}
\usepackage{titletoc}

%% file: macros.tex
\newcounter{cond}
\newcommand{\conditem}[1]{\refstepcounter{cond}\item[\textnormal{(\thecond)}]\label{#1}}

\newtheorem{theorem}{Theorem}[section]

\newtheorem{definition}[theorem]{Definition}

 \newcommand{\C}{\mathcal{C}}
 \newcommand{\T}{\mathcal{T}}

\newcommand{\A}{\mathcal{A}}

\newcommand{\N}{\mathbb{N}}

\newcommand{\updategoal}{\mathsf{update}\_\mathsf{goal}}
\newcommand{\updateinit}{\mathsf{update}\_\mathsf{init}}

\newcommand{\supp}{\mathrm{supp}}

\newcommand{\hp}[1]{%
  \ifnum#1>0
    \hphantom{\_}%
    \hp{\the\numexpr#1-1\relax}%
  \fi
}


%% file: Section/new_intro.tex
\section{Introduction}
\label{sec:intro}

The \emph{generalization} problem in reinforcement learning (RL) asks how
agents can learn policies that transfer effectively to new, unseen situations
beyond their training experience, a capability widely recognized as
essential for practical deployment~\citep{MalikLR21, KirkZGR23, Korkmaz24}.
A particularly demanding form of generalization is \emph{zero-shot
generalization}, where agents must transfer to unseen tasks without any
retraining~\citep{beck2023survey, KirkZGR23}. Significant effort has gone
into developing agents capable of zero-shot generalization, spanning
meta-learning~\citep{beck2023survey}, inductive task
structures~\citep{inala2020synthesizing, SubramanianKRB24}, and goal-conditioned
approaches~\citep{naderian2021c}. Yet despite this progress in
\emph{training} zero-shot generalizable agents, the complementary problem of
\emph{evaluating} their generalization ability, i.e., determining in a
principled way whether and to what degree an agent generalizes to unseen tasks, has received surprisingly little attention.

To the best of our knowledge, no principled approach to \emph{evaluating and
comparing generalization properties of different RL agents} has been proposed.
Current practice reduces to brute-force testing: run agents on a large number
of unseen environments and count successes. This lack of a principled
comparison leads to two significant challenges. First, reliable conclusions
require testing on a \emph{large number of new environments}, making
evaluation sample-inefficient and costly. Second, and more fundamentally,
when an agent fails to generalize, current methods offer no means of
identifying \emph{which state-action pairs caused the failure}, leaving
practitioners with no actionable signal for improvement. Ideally, one would
want to flag precisely the behaviors that lead to incorrect generalization,
thereby enhancing explainability of where and why RL generalization breaks
down. Our goal is to address both challenges by proposing a principled
framework for evaluating and comparing generalization properties of RL agents.

We propose to evaluate generalization with respect to \emph{inductive
reach-avoid tasks}~\citep{SubramanianKRB24} --- structured families of tasks
that share the same dynamics but differ inductively in low-level details such
as initial position or goal location. The structural similarity within such a
family provides a natural evaluation substrate: a truly generalizable agent,
trained on a few tasks from the family, should transfer its policy to unseen
members. Fig.~\ref{fig:r4illus} illustrates one such family, where each
task requires navigating a car from an initial to a goal location while
avoiding an obstacle, with both locations shifting left by one unit across
tasks. The key challenge is formalizing what it means for a trajectory to
capture the \emph{essence} of the task family --- a notion precise enough to
serve as an evaluation criterion.

\begin{figure*}[t]
    \centering
    \begin{subfigure}[t]{0.32\textwidth}
        \centering
        \includegraphics[width=\linewidth]{%
            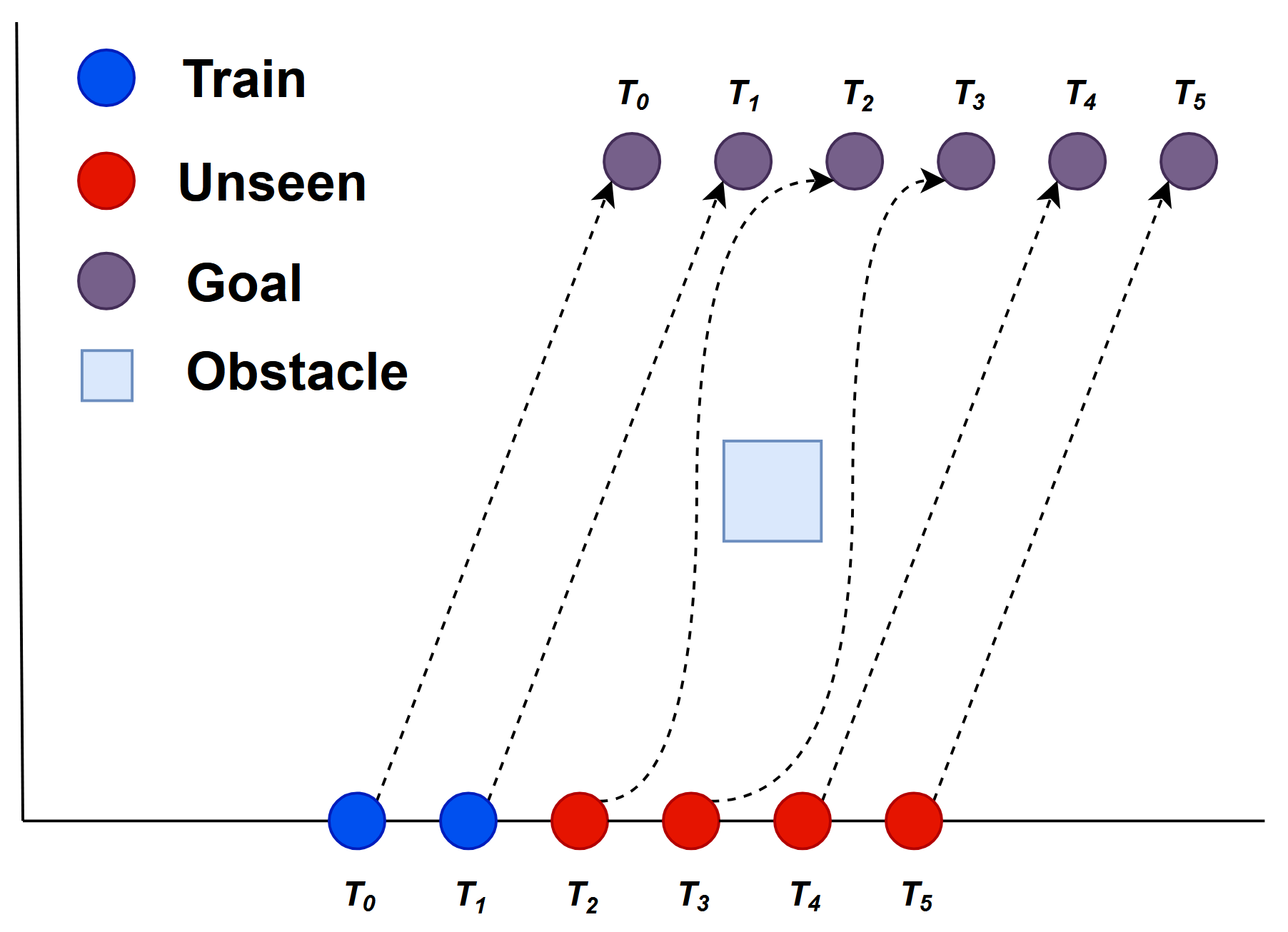}
        \caption{Inductive task}
        \label{fig:r4illus}
    \end{subfigure}
    \hfill
    \begin{subfigure}[t]{0.3\textwidth}
        \centering
        \includegraphics[width=\linewidth]{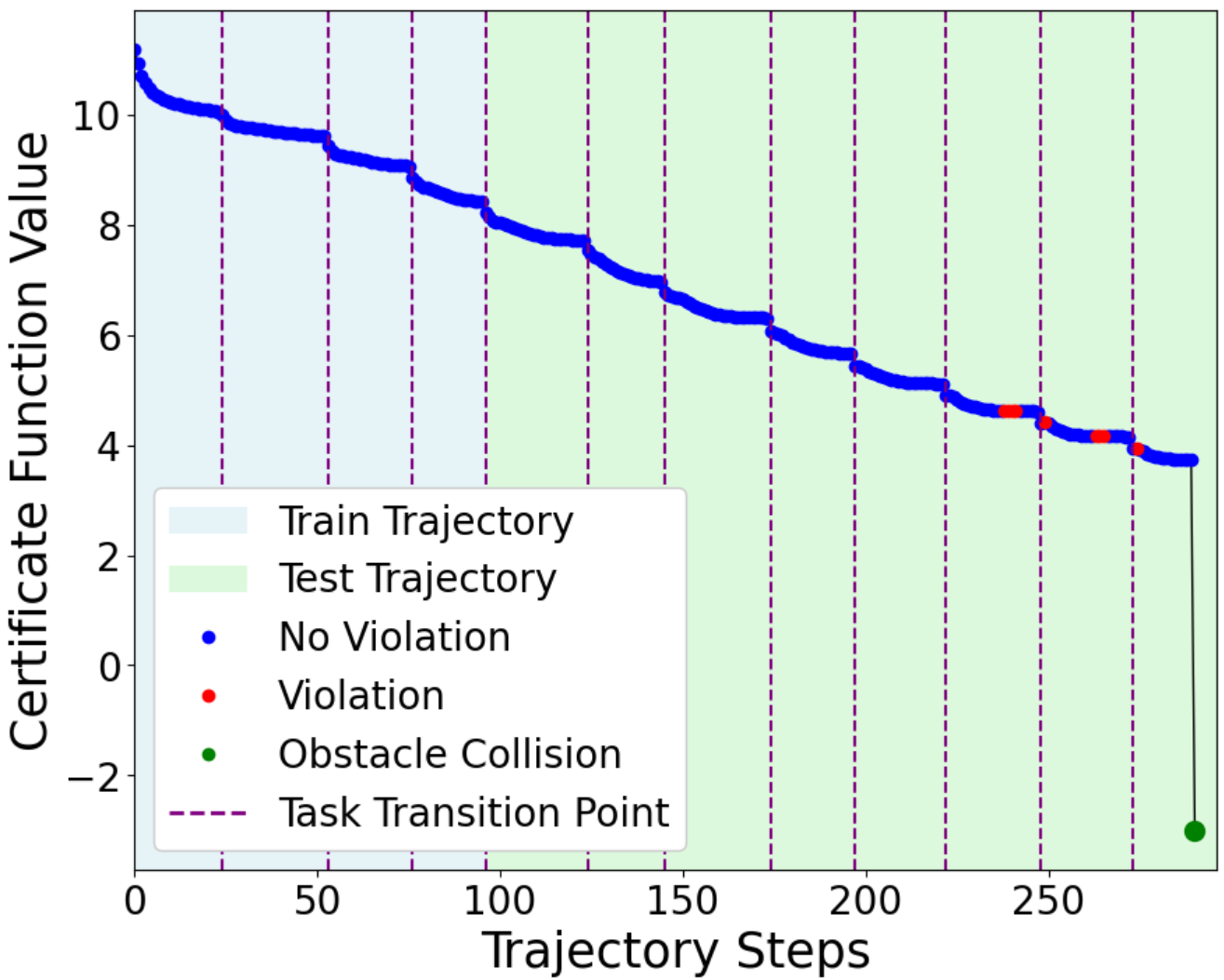}
        \caption{Certificate function values}
        \label{fig:certi-genrl-cpmgp_obs}
    \end{subfigure}
    \hfill
    \begin{subfigure}[t]{0.32\textwidth}
        \centering
        \includegraphics[width=\linewidth]{%
            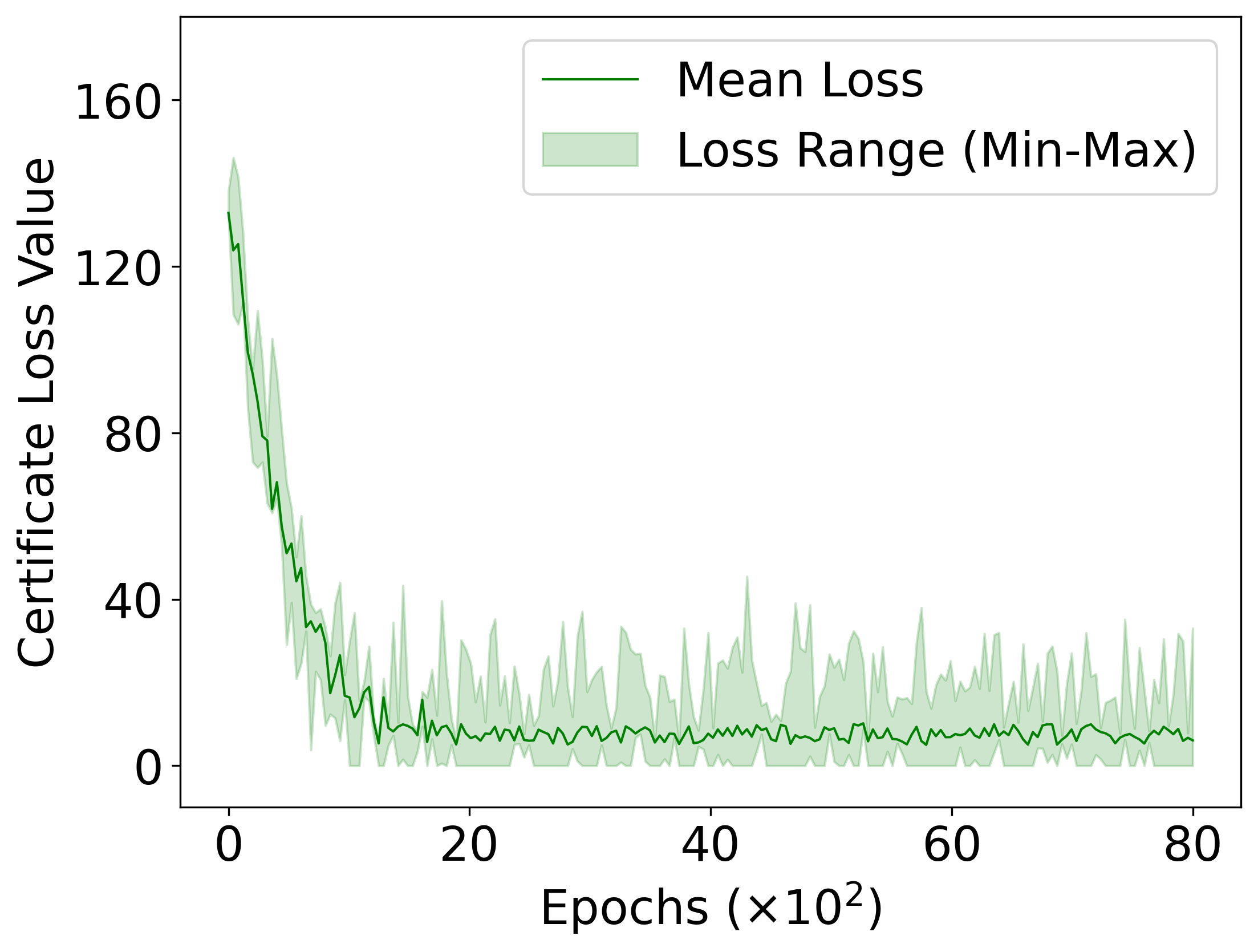}
        \caption{Loss graph for learning the certificate}
        \label{fig:loss_reach_mot}
    \end{subfigure}
    \caption{Car: Moving Initial Point and Goal Point with obstacle:
    The task is to navigate the car from its \textit{initial} position to the corresponding \textit{goal}.}
    \label{fig:motivating_example_cpmgiobs}
\end{figure*}

To this end, we introduce \emph{certificates of correct generalization}. A
certificate is a scalar-valued function $\mathcal{C}$ over MDP states and
task indices, trained on demonstration trajectories from a few tasks in the
inductive family. Intuitively, the certificate value measures an agent's
progress toward satisfying its current reach-avoid task: it is required to be
(1)~non-negative at safe states along demonstration trajectories,
(2)~strictly decreasing along each trajectory until the goal is reached,
(3)~decreasing across subsequent task instances, and (4)~negative at all
unsafe states. Together, these conditions capture the \emph{monotonic
invariant behavior} that a well-generalizing agent should exhibit. We show
(Theorem~\ref{thm:certificate_reach_avoid}) that the existence of such a certificate guarantees
that all demonstration trajectories satisfy their respective reach-avoid
tasks.

This certificate then serves as an evaluation tool. Given a new RL agent, we
run it on unseen tasks from the inductive family and count how many states
along its trajectories \emph{violate} the certificate conditions. Agents with
fewer violations are concluded to generalize better, a conclusion our
experiments consistently validate against ground-truth task-solving
performance across a diverse set of continuous control environments. To
compute certificates in practice, we design a \emph{training procedure} for
learning neural network certificates from oracle demonstration trajectories,
with a loss function enforcing conditions (1)--(4) above. The certificate
function learned for the inductive task in Fig.~\ref{fig:r4illus} is
visualized in Fig.~\ref{fig:certi-genrl-cpmgp_obs}.

\noindent\textbf{Contributions.} We make the following contributions: 
\begin{itemize}[leftmargin=10pt]
    \item We introduce \emph{certificates of correct generalization} for
    inductive reach-avoid tasks, with a formal guarantee that certificates
    witness task satisfaction (Theorem~\ref{thm:certificate_reach_avoid}).
    \item We design an LSTM-based training procedure for learning certificates
    from demonstration trajectories, with a loss function directly enforcing
    the certificate conditions.
    \item We empirically demonstrate across multiple continuous control
    environments that certificate violation rates reliably distinguish RL
    algorithms by their generalization ability, correlating strongly with
    ground-truth task-solving performance.
\end{itemize}

\paragraph{Related Work.}
Generalizable RL has been approached primarily through meta-learning, where
agents learn to adapt across task distributions. Gradient-based methods such
as MAML~\citep{finn2017model} and its variants~\citep{nagabandi2019deep}
enable rapid adaptation, while VariBAD~\citep{zintgraf2021varibad} uses
variational inference for zero-shot generalization via task embeddings. Most
relevant to our setting, PSMP~\citep{inala2020synthesizing} and
GenRL~\citep{SubramanianKRB24} exploit inductive task structure to improve
transferability. However, all of these works focus on \emph{training}
generalizable agents; none proposes a principled method for \emph{evaluating}
or \emph{comparing} generalization properties across agents.

Certificate functions have recently gained prominence as tools for verifying
that control policies satisfy given specifications~\citep{DawsonGF23}. Neural
network certificates have been studied for reachability, safety, and stability
in both deterministic~\citep{ChangRG19, AbateAGP21, EdwardsPA24, Zhang0V023}
and stochastic~\citep{LechnerZCH22, ZikelicLHC23, MathiesenCL23,
ChatterjeeHLZ23} settings. However, all existing methods certify a
\emph{single policy on a single task}. To our knowledge, we are the first to
repurpose certificates for cross-agent, cross-task evaluation of RL
generalizability.

%% file: Section/prelims.tex
\section{Preliminaries}
\label{sec:prelims}

\paragraph{Markov Decision Process.} RL environments are formally modeled via Markov decision processes. A {\em Markov decision process (MDP)} $\mathcal{M}$ is a tuple $(S, A, P)$, where $S$ is a set of states, $A$ is a set of actions, and $P: S \times A \times S \to \mathbb{R}_{\geq 0}$ is a probabilistic transition function. We use $P(\cdot \mid s,a)$ to denote the probability distribution of the successor state after taking action $a$ in state $s$. In this work, we restrict our attention to {\em deterministic MDPs}, meaning that each $P(\cdot \mid s,a)$ is a Dirac distribution assigning probability mass $1$ to a single state $s'$ and probability mass $0$ to every other state. By a slight abuse of notation, we write $s' = P(s,a)$.

A {\em trajectory} $\zeta$ in an MDP is an infinite sequence $(s_t,a_t)_{t=0}^\infty$ of state-action pairs. A {\em finite trajectory} is a finite prefix of a trajectory. A (pure positional) {\em policy} in an MDP is a map $\pi: S \rightarrow A$, which to each state assigns an action to be taken. For simplicity, we denote the trajectory obtained by a pure positional policy $\pi$ by $(s_t)_{t=0}^\infty$  where $s_{j+1} = P(s_j, \pi(s_j))$.

\paragraph{Reach-Avoid Tasks.} This work focuses on {\em reach-avoid} tasks.  Given an MDP with states $S$, a reach-avoid task is defined by the tuple $ (G, \eta, X_u) $ comprising {\em goal states} $G \subseteq S$, {\em initial state distribution} $\eta$ over MDP states $S$, and {\em unsafe states} $X_u \subseteq S\setminus \supp(\eta)$ where $\supp(\eta)$ is the support of the initial state distribution. 
The objective of a reach-avoid task is to reach the goal states from the initial states while avoiding all unsafe states along the trajectory. Formally, a trajectory $ \zeta = (s_t, a_t)_{t=0}^\infty$ satisfies a reach-avoid task $(G, \eta, X_u)$, denoted $ \zeta \models (G, \eta, X_u) $, if there exists $T \geq 0$ such that $ s_0 \sim \eta $, $ s_T \in G $, and $s_j \in S\setminus X_u$ for all $0 \leq j \leq T$.
For simplicity, we use the notation $X_s = S \setminus X_u$ to denote the {\em safe states}.

\paragraph{Assumption.}  We assume that all states in the set of unsafe states $X_u$ are sink states. This means that once a trajectory enters any state $s \in X_u$, it cannot leave, i.e., $P(s, a) = s$ for all $s \in X_u$ and $a \in A$. This assumption reflects scenarios where entering an unsafe state represents an irreversible failure, such as collisions in robotic navigation. As a result, any trajectory that enters $X_u$ is immediately terminated and considered invalid for the reach-avoid task.

\section{Evaluation of RL Generalizability}\label{sec:framework}

We present our framework for evaluating and comparing generalization properties of RL agents. 

Our framework evaluates the generalization of RL algorithms based on their ability to extrapolate to similar but different tasks. 
To achieve this, we consider generalization with respect to {\em inductive tasks} \citep{SubramanianKRB24}, which form a family of tasks that are structurally similar but differ inductively in the low-level details. We consider {\em inductive reach-avoid tasks}. We are guided by the intuition that these tasks are so similar that their policies should also be similar, making them a good fit for evaluating RL generalizability. Inductive reach-avoid tasks are formally defined in Section~\ref{sec:reachavoidtasks}.

In order to evaluate generalization properties of RL algorithms, our framework uses a set of task-trajectory pairs that serve as {\em demonstrations} of what a trajectory that satisfies some reach-avoid task looks like. It then uses these demonstrations to extract a quantitative measure of {\em progress} along demonstration trajectories towards satisfying their reach-avoid tasks. This progress measure is formally captured via {\em certificates of correct generalization}, a notion that we introduce in Section~\ref{sec:certificate}.

Finally, our framework is presented in Section~\ref{sec:evaluation}. Our framework first trains a certificate of correct generalization from a given set of task-trajectory demonstration pairs. It then runs each RL agent on a set of new and unseen tasks, and uses the certificate to quantify and compare the generalization capabilities of each RL agent.

\subsection{Inductive Reach-Avoid Tasks}\label{sec:reachavoidtasks}

An inductive reach-avoid task is a sequence of reach-avoid tasks over the same MDP, such that the subsequent reach-avoid task builds upon the previous one by progressively updating the goal region, the initial state distribution, or both. We assume that the set of unsafe states is the same for all tasks in the sequence, as these correspond to e.g.,~physical obstacles or collisions. For a set $S$, denote by $\mathcal{P}(S)$ the set of all subsets of $S$ and by $\mathcal{D}(S)$ the set of all probability distributions over $S$.

\begin{definition}[Inductive reach-avoid task]\label{def:task}
Consider an MDP with a set of states $S$. An \emph{inductive reach-avoid  task} is given by a tuple $ \mathcal{T} = (\mathcal{T}_0, \updategoal, \updateinit) $, where
 (1)~$ \mathcal{T}_0 = (G_0, \eta_0, X_u) $ is the \emph{base task} with initial goal states $ G_0  \subseteq S $, initial state distribution $ \eta_0 $ and unsafe states $X_u$,
(2)~$ \updategoal: \mathcal{P}(S) \mapsto \mathcal{P}(S) $ is the \emph{goal update function}, and  
(3)~$ \updateinit: \mathcal{D}(S) \mapsto \mathcal{D}(S) $ is the \emph{initial distribution update function}.
Then, the inductive task $\mathcal{T}$ denotes the sequence of reach-avoid tasks $\T_i = (G_i, \eta_i, X_u)$ for each $i\in\mathbb{N}_0$, where $\T_0$ is defined as above  and $\T_{i+1} = (\updategoal(G_i), \updateinit(\eta_i), X_u)$  for  $i\in\mathbb{N}_0$.
\end{definition}

For each task to be satisfiable, we assume that the goal region and the support of the initial distribution of each task are contained within the safe set $X_s$, i.e.~that $ G_i \subseteq X_s$ and $ \mathrm{supp}(\eta_i) \subseteq X_s$ hold for all $i \in \mathbb{N}_0$.

\subsection{Certificates of Correct Generalization}\label{sec:certificate}

We now define our certificates for validating generalizability in inductive reach-avoid tasks. Consider an inductive reach-avoid task $\T$ giving rise to a sequence of reach-avoid tasks $\T_0, \T_1, \ldots$. Our certificate is defined with respect to a set $\xi = \{ (\T_0, \zeta_0), (\T_1, \zeta_1), \ldots \}$ of task-trajectory pairs, where each $\zeta_i$ is a finite (possibly empty) set of MDP trajectories. These trajectories serve as {\em demonstrations} of what a trajectory that satisfies the reach-avoid task $\T_i$ should look like.

Before defining the certificate, we fix some additional notation. For each task index $i \in \mathbb{N}_0$, let $\zeta_i = \{\zeta_i^1,\dots,\zeta_i^{K_i}$\} be the set of demonstration trajectories for task $\T_i$ with $K_i \in \mathbb{N}_0$ and each $\zeta_i^k = (s_{i,j}^k)_{j=0}^\infty$ being an MDP trajectory. We use $T_{G_i}(\zeta_i^k) = \inf\{j \in \mathbb{N}_0 \mid s_{i,j}^k \in G_i\}$ to denote the index of the first state along trajectory $\zeta_i^k$ that is in the goal region of the task $\T_i$, with $T_i = \infty$ if the goal region is not reached.

Formally, a certificate of correct generalization is a function $\mathcal{C}: S \times \mathbb{N}_0 \to \mathbb{R}$ assigning a real value to each MDP state and task index pair. The certificate value is required to be (1)~non-negative at all safe states along the demonstration trajectories, (2)~strictly decreasing along each demonstration trajectory until it reaches a goal state, (3)~decreasing across subsequent tasks, and (4)~negative at all unsafe states.

\begin{definition}[Certificates of correct generalization]\label{def:certificate_reach_avoid}
Consider an inductive reach-avoid task $\T$ and a set $\xi = \{ (\T_0, \zeta_0), (\T_1, \zeta_1), \ldots \}$ of task-trajectory pairs defined as above. A {\em certificate of correct generalization} for the set of demonstration trajectories $\xi$ is a function $\mathcal{C}: S \times \mathbb{N}_0 \to \mathbb{R}$, such that the following conditions hold:
\begin{description}
    \conditem{reach-avoid-def:nonneg}(Non-Negativity at Safe States) For each task index $i \in \mathbb{N}_0$, trajectory $\zeta_i^k$ and safe state $s_{i,j}^k \in X_s$ along the trajectory $\zeta_i^k$, we have $\mathcal{C}(s_{i,j}^k, i) \geq 0$.
    \conditem{reach-avoid-def:sdwt}(Strict Decrease Within a Task Until the Goal) For each task index $i \in \mathbb{N}_0$, there exists $\varepsilon_i > 0$ such that for each trajectory $\zeta_i^k$ and safe state $s_{i,j}^k \in X_s$ along the trajectory $\zeta_i^k$ with $0 \leq j < T(\zeta_i^k)$, we have $\mathcal{C}(s_{i,j+1}^k, i) \geq 0$ and 
    $\mathcal{C}(s_{i,j}^k, i) > \mathcal{C}(s_{i,j+1}^k, i) + \varepsilon_i$.
    \conditem{reach-avoid-def:sdat}(Decrease Across Subsequent Tasks) For each task index $i \in \mathbb{N}_0$, the certificate value at the goal state of every trajectory for task $\T_i$ is greater than the value at the initial state of every trajectory for task $\T_{i+1}$, i.e.
    \[
    \min_{\substack{1 \leq k \leq K_i\\ T(\zeta_i^k)<\infty}}
    \mathcal{C}(s^k_{i,T(\zeta_i^k)},i)
    >
    \max_{\substack{1 \leq k \leq K_{i+1}\\ T(\zeta_{i+1}^k)<\infty}}
    \mathcal{C}(s^k_{i+1,T(\zeta_{i+1}^k)},i+1)
    \]
    \conditem{reach-avoid-def:neg}(Strict Negativity at Unsafe States) For all unsafe states $s \in X_u$ and task indices $i \in \mathbb{N}_0$, we have $\mathcal{C}(s, i) < 0$. 
\end{description}
\end{definition}

The intuition behind our certificate is that it captures and quantifies {\em progress} along demonstration trajectories towards satisfying their reach-avoid tasks. This is because the certificate value is required to be non-negative at safe states and to strictly decrease along a demonstration trajectory while keeping the certificate value non-negative, until the goal region is reached (Conditions 1 and 2). Hence, the trajectory must eventually reach the goal region, and so the certificate value measures progress towards the goal. Moreover, as the certificate value is negative at unsafe states (condition 4), the trajectory must avoid unsafe states until the goal is reached, and so the reach-avoid task is satisfied. Finally, the certificate furthermore captures the inductive relation between subsequent task instances in the inductive reach-avoid task (condition 3).

The following theorem formalizes this intuition, and shows that if a certificate of correct generalization exists, then all demonstration trajectories satisfy their reach-avoid tasks. Hence, our certificate indeed captures and quantifies progress along demonstration trajectories towards satisfying their reach-avoid tasks. This will allow us to utilize our certificates to evaluate generalization properties of RL algorithms, by checking whether they preserve this progress behaviour along the trajectories generated for new and unseen tasks.



\begin{theorem}\label{thm:certificate_reach_avoid}
Let $\T$ be an inductive reach-avoid task giving rise to a sequence of reach-avoid tasks $\T_0, \T_1, \cdots$, and let $\xi = \{ (\T_0, \zeta_0), (\T_1, \zeta_1), \ldots \}$ be a set of task-trajectory pairs.
If a certificate of correct generalization for the set of demonstration trajectories $\xi$ exists, then $\zeta_i^k \models \T_i$  for all task indices $i \in \mathbb{N}_0$ and trajectory indices $1 \leq k \leq K_i$.
\end{theorem}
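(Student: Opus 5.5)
We fix an arbitrary task index $i \in \mathbb{N}_0$ and a demonstration trajectory $\zeta_i^k = (s_{i,j}^k)_{j=0}^\infty$, and show that it reaches $G_i$ while staying in $X_s$ up to that time. Condition~\eqref{reach-avoid-def:sdat} is not needed: it only relates different tasks, and the conclusion is a per-task, per-trajectory statement. The proof uses only conditions~\eqref{reach-avoid-def:nonneg}, \eqref{reach-avoid-def:sdwt} and~\eqref{reach-avoid-def:neg}. We also take as given that $s_{i,0}^k \in \supp(\eta_i)$, i.e.\ that demonstrations for $\T_i$ start from its initial distribution. By the standing assumption $\supp(\eta_i)\subseteq X_s$, this gives $s_{i,0}^k \in X_s$.

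\textbf{Step 1 (safety by induction).} We prove that for every $j \le T(\zeta_i^k)$ (every $j$, if $T=\infty$) we have $s_{i,j}^k \in X_s$ and $\mathcal{C}(s_{i,j}^k,i)\ge 0$. The base case $j=0$ follows from $s_{i,0}^k\in X_s$ together with condition~\eqref{reach-avoid-def:nonneg}.

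For the inductive step, suppose $s_{i,j}^k\in X_s$ with $j<T(\zeta_i^k)$. Condition~\eqref{reach-avoid-def:sdwt} then gives $\mathcal{C}(s_{i,j+1}^k,i)\ge 0$. If $s_{i,j+1}^k$ were in $X_u$, condition~\eqref{reach-avoid-def:neg} would force $\mathcal{C}(s_{i,j+1}^k,i)<0$, a contradiction. Hence $s_{i,j+1}^k\in X_s$. The sink-state assumption is not even needed here, though it is consistent with the argument.

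\textbf{Step 2 (reachability by a ranking argument).} Suppose for contradiction that $T(\zeta_i^k)=\infty$. By Step 1, every state along the trajectory is safe with nonnegative certificate value. Condition~\eqref{reach-avoid-def:sdwt} then applies at every $j$, and telescoping gives
\[
0 \le \mathcal{C}(s_{i,j}^k,i) < \mathcal{C}(s_{i,0}^k,i) - j\,\varepsilon_i \quad \text{for all } j .
\]
This fails once $j > \mathcal{C}(s_{i,0}^k,i)/\varepsilon_i$. Hence $T:=T(\zeta_i^k)<\infty$ and $s_{i,T}^k\in G_i$. Combined with Step 1, all of $s_{i,0}^k,\dots,s_{i,T}^k$ lie in $X_s$, which is exactly $\zeta_i^k\models\T_i$.

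\textbf{Main obstacle.} The main difficulty is not the ranking argument but the hypotheses it relies on. First, we must justify that demonstrations start in $\supp(\eta_i)$, which the definition of satisfaction implicitly requires. Second, condition~\eqref{reach-avoid-def:sdwt} is stated only for safe states, so the induction must establish safety before invoking it. The plan is to state the assumption on initial states explicitly and to interleave safety and decrease in a single induction, as above. The existence of a uniform $\varepsilon_i>0$ is what rules out an infinite decreasing sequence converging to a nonnegative limit, so it should be invoked explicitly in Step 2.
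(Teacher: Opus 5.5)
Your proof is correct and follows the paper's argument. Both use the initial-support assumption with Condition 1 to get $\mathcal{C}(s_{i,0}^k,i)\ge 0$, Condition 2 to bound the hitting time of $G_i$ by $\lceil \mathcal{C}(s_{i,0}^k,i)/\varepsilon_i\rceil$, and Condition 4 to turn non-negativity into safety, with Condition 3 unused. Your explicit interleaved induction is more careful than the paper, which applies Condition 2 before establishing that the states are safe; the only fix needed is that your telescoped strict inequality holds for $j\ge 1$, not $j=0$.
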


\begin{proof}
    Suppose that $\mathcal{C}$ is a certificate of correct generalization for $\xi$. We need to show that, for each task index $i \in \mathbb{N}_0$ and trajectory index $1 \leq k \leq K_i$, we have $\zeta_i^k \models \T_i$. That is, if $\zeta_i^k = (s_{i,j}^k)_{j=0}^\infty$ is an MDP trajectory, we need to show that there exists an index $T \in \mathbb{N}_0$ such that $s_{i,T}^k \in G_i$ and $s_{i,j}^k \not\in X_u$ for $0 \leq j <T$.

    The existence of an index $T \in \mathbb{N}_0$ such that $s_{i,T}^k \in G_i$ follows by conditions $1$ and $2$ in Definition~\ref{def:certificate_reach_avoid}. This is because the support of the initial state distribution $\eta_i$ of task $\T_i$ is assumed to be contained within the safe set $X_s$ (see Section~\ref{sec:reachavoidtasks}). Hence, by condition 1, we have $\mathcal{C}(s_{i,0}^k,i) \geq 0$. On the other hand, by condition 2, it follows that until a state in $G_i$ is reached, we will have $\mathcal{C}(s_{i,j}^k,i) \geq \mathcal{C}(s_{i,j+1}^k,i) + \varepsilon_i$ and $\mathcal{C}(s_{i,j+1}^k,i) \geq 0$. Hence, a state in $G_i$ is reached in at most $T \leq \lceil \mathcal{C}(s_{i,0}^k,i) / \varepsilon_i \rceil$ steps. Moreover, since $\mathcal{C}(s_{i,j}^k,i) \geq 0$ holds for $0 \leq j \leq T$, it follows by condition 4 that $s_{i,j}^k \not\in X_u$ for $0 \leq j <T$. This concludes the proof that $\zeta_i^k \models \T_i$.
\end{proof}

\subsection{Evaluating RL Generalizability via Certificates}\label{sec:evaluation}

We now show how our certificates of correct generalization can serve as a tool for evaluating and comparing the generalization capabilities of different RL algorithms $\pi_1, \dots, \pi_n$.

First, our framework learns a certificate of correct generalization for a given set of demonstration trajectories $\xi = \{ (\T_0, \zeta_0), (\T_1, \zeta_1), \ldots \}$. The training procedure for learning certificates and the details on how demonstration trajectories are collected are presented in Section~\ref{sec:learning}.

Second, a finite number of test reach-avoid tasks $\Phi^{\text{test}} \subseteq \T$ are chosen to assess generalizability of given RL agents $\pi_1, \dots, \pi_n$. We require that demonstration trajectories for the test reach-avoid tasks are not used in the certificate training procedure. That is, a reach-avoid task $\T_i$ can be used as a test task only if the set of demonstration trajectories $\zeta_i$ is empty.

Finally, for each RL agent $\pi_i$ and for each test reach-avoid task $\phi \in \Phi^{\text{test}}$, we use $\pi_i$ to generate a trajectory $\xi_{\pi_i}^\phi$ for the test task $\phi$. We then count the total number of violations of the certificate conditions along $\xi_{\pi_i}^\phi$. That is, for each of Conditions~1-4 in Definition~\ref{def:certificate_reach_avoid}, we count at how many states along the trajectory $\xi_{\pi_i}^\phi$ the condition is violated. In reach-avoid tasks, all steps beyond the first unsafe state (denoted as "OC") are treated as violations. We denote by $N_{\pi_i}^\phi$ the total number of certificate violations for task $\phi$, and define $ N_{\pi_i} = \sum_{\phi \in \Phi^{\text{test}}} N_{\pi_i}^\phi $ to be the {\em total number of certificate violations} of the RL agent $\pi_i$ across all test tasks in $\Phi^{\text{test}}$. The {\em percentage of certificate violations} is computed using the ratio of the number of certificate violations to the total number of states in the trajectory. We then compare the percentages of certificate violations for different RL agents $\pi_1,\dots,\pi_n$, and conclude that RL agents $\pi_i$ with lower percentages of certificate violations have better generalizability properties.

This framework provides a principled method to evaluate generalization properties of RL agents on unseen tasks, by using certificates of correct generalization from given demonstrations as a validation tool. By comparing the percentages of certificate violations across different RL algorithms, we gain insights into their ability to generalize to new tasks and preserve the progress behavior captured by the certificate.

\section{Learning Certificates}\label{sec:learning}

This section describes the training procedure for learning a certificate of correct generalization. Consider an inductive reach-avoid task $\T$ giving rise to a sequence of reach-avoid tasks $\T_0, \T_1, \ldots$. Given a set of demonstration trajectories $\xi = \{ (\T_0, \zeta_0), (\T_1, \zeta_1), \ldots \}$ defined as in the previous section, our objective is to train a neural network to approximate the certificate function $\C: S\times \N_0 \to \mathbb{R} $ defined in Definition~\ref{def:certificate_reach_avoid}.

\noindent\textbf{Training Data.} We start by describing how our training data, i.e.~the demonstration trajectories used to train a neural network certificate, are collected. We train the certificate on finite-length prefixes of trajectories for tasks $ \T_0, \ldots, \T_{n-1}$, i.e., the first $n$ tasks in the inductive reach-avoid task $\T$, where $n \in \mathbb{N}$ is an algorithm parameter. We use finite-length prefixes because one cannot obtain infinite-length trajectories in practice. We refer to tasks $\T_0, \ldots, \T_{n-1}$ as the {\em training tasks} and tasks $\T_n, \T_{n+1}, \ldots$ as the {\em testing tasks}. Hence, demonstration trajectories will be finite-length trajectories in the sets $\zeta_i$ for $0 \leq i \leq n-1$, whereas $\zeta_i = \emptyset$ for $i \geq n$.

For each training task $\T_i$ with $0 \leq i \leq n-1$, the set of demonstration trajectories $\zeta_i$ comprises (1)~positive examples, i.e. trajectories that satisfy the reach-avoid task $\T_i$, and (2)~negative examples, i.e. trajectories that violate $\T_i$. Positive examples are needed in order to capture progress behaviour in trajectories that satisfy the reach-avoid task with respect to Conditions 1-3 in Definition~\ref{def:certificate_reach_avoid}. Among the negative examples, all trajectories are of the kind that reach an unsafe state and thus violate the avoidance objective, which are needed to capture condition 4 in Definition~\ref{def:certificate_reach_avoid}. We do not consider finite-length trajectories that do not violate safety but violate reachability, since one cannot ascertain whether such a trajectory could have satisfied the specification if it were longer.  

These training trajectories are collected using an oracle that generates trajectories of positive and negative examples in the environment for the given reach-avoid task. Positive and negative trajectories are obtained by sampling many trajectories in the environment using a planner, and labeling them as positive or negative examples depending on whether the trajectory satisfies the reach-avoid task.

\noindent\textbf{Training Procedure.} Our training procedure is formulated as a learning task, where the goal is to minimize a loss function that is designed to enforce Conditions 1-4 in Definition~\ref{def:certificate_reach_avoid} along demonstration trajectories. In particular, our loss function is a sum of four loss terms, each corresponding to one condition in Definition~\ref{def:certificate_reach_avoid}, where the outer sum is taken over all training tasks and all demonstration trajectories:

\begin{align*}
&\mathsf{Loss}
= \sum_{i=0}^{n-1} \sum_{(s_{i,j})_{j=0}^{l} \in \zeta_i} \Bigl(
\underbrace{
\lambda_{\text{safe}} \sum_{j=0}^{l}
\max\{0,-C(s_{i,j},i)\}\,1_{s_{i,j}\in X_s}
}_{\scriptsize\text{cond.~(\ref{reach-avoid-def:nonneg})}}
+
\underbrace{
\lambda_{\text{unsafe}} \sum_{j=0}^{l}
\max\{0,C(s_{i,j},i)\}\,1_{s_{i,j}\in X_u}
}_{\scriptsize\text{cond.~(\ref{reach-avoid-def:neg})}}
\\
&\quad+
\underbrace{
\sum_{j=0}^{l-1}
\max\{0,C(s_{i,j+1},i)+\varepsilon_i-C(s_{i,j},i)\}\,
1_{(s_{i,j},s_{i,j+1})\in X_s}
}_{\scriptsize\text{cond.~(\ref{reach-avoid-def:sdwt})}}
\\
&\quad+
\underbrace{
\sum_{(s_{i+1,j})_{j=0}^{l}\in\zeta_{i+1}}
\max\{0,C(s_{i+1,0},i+1)+\varepsilon_i-C(s_{i,l},i)\}\,
1_{s_{i,l}\in G_i}
}_{\scriptsize\text{cond.~(\ref{reach-avoid-def:sdat})}}
\Bigr).
\end{align*}

Here, for each task index $i \in \mathbb{N}_0$ and for each finite-length trajectory in $\zeta_i$, we use $l$ to denote the index of the first visit to either the goal region $G_i$ of $T_i$ (for positive example trajectories) or the unsafe set $X_u$ (for negative example trajectories). The first summation enforces the non-negativity condition (\ref{reach-avoid-def:nonneg}) at safe states $X_s$ by penalizing violations. The fourth summation enforces the strict negativity condition (\ref{reach-avoid-def:neg}) at unsafe states $X_u$. The scaling factors $\lambda_{\text{safe}}$ and $\lambda_{\text{unsafe}}$ are hyperparameters that control the importance of these penalties. The second summation penalizes violations of monotonic decrease within task trajectories, enforcing the certificate function to decrease as the trajectory progresses. The third summation penalizes violations of monotonic decrease across tasks, ensuring the certificate values at the goal states of task $\mathcal{T}_i$ are greater than those of the start states of $\mathcal{T}_{i+1}$.

We use the \emph{Long Short-Term Memory}~\citep{hochreiter1997long} (LSTM) architecture to train the certificate function. The LSTM acts as a function $ \mathcal{C}(s, i) $ that takes a state $s$ and a task index $i$ as input and outputs a real number representing the certificate value. 
The LSTM is trained on the set of all sequences of concatenated trajectories of the form $ (\zeta_0^{k_0}, \zeta_1^{k_1}, \ldots, \zeta_{n-1}^{k_{n-1}}) $, where $ \zeta_i^{k_i} \in \zeta_i$ is a demonstration trajectory for task $ \mathcal{T}_i $ (generated by the oracle). We generate and use multiple batches of $\xi$ for effective training. We choose LSTMs since they are  well-suited to capture sequential dependencies in data. 
Trajectories represent ordered sequences of states, where the progression along each trajectory encodes information about task dynamics, safety, and proximity to the target state. 
Furthermore, concatenated trajectories ensure that dynamics across subsequent tasks are encoded. Since LSTMs are designed to maintain memory and capture long-term dependencies, we expect the LSTMs to capture these intra- and inter-trajectory dependencies while respecting safety conditions.

%% file: Section/experiments.tex
\begin{figure*}[t]
    \centering
    \begin{subfigure}[t]{0.30\textwidth}
        \centering
        \includegraphics[width=\textwidth]{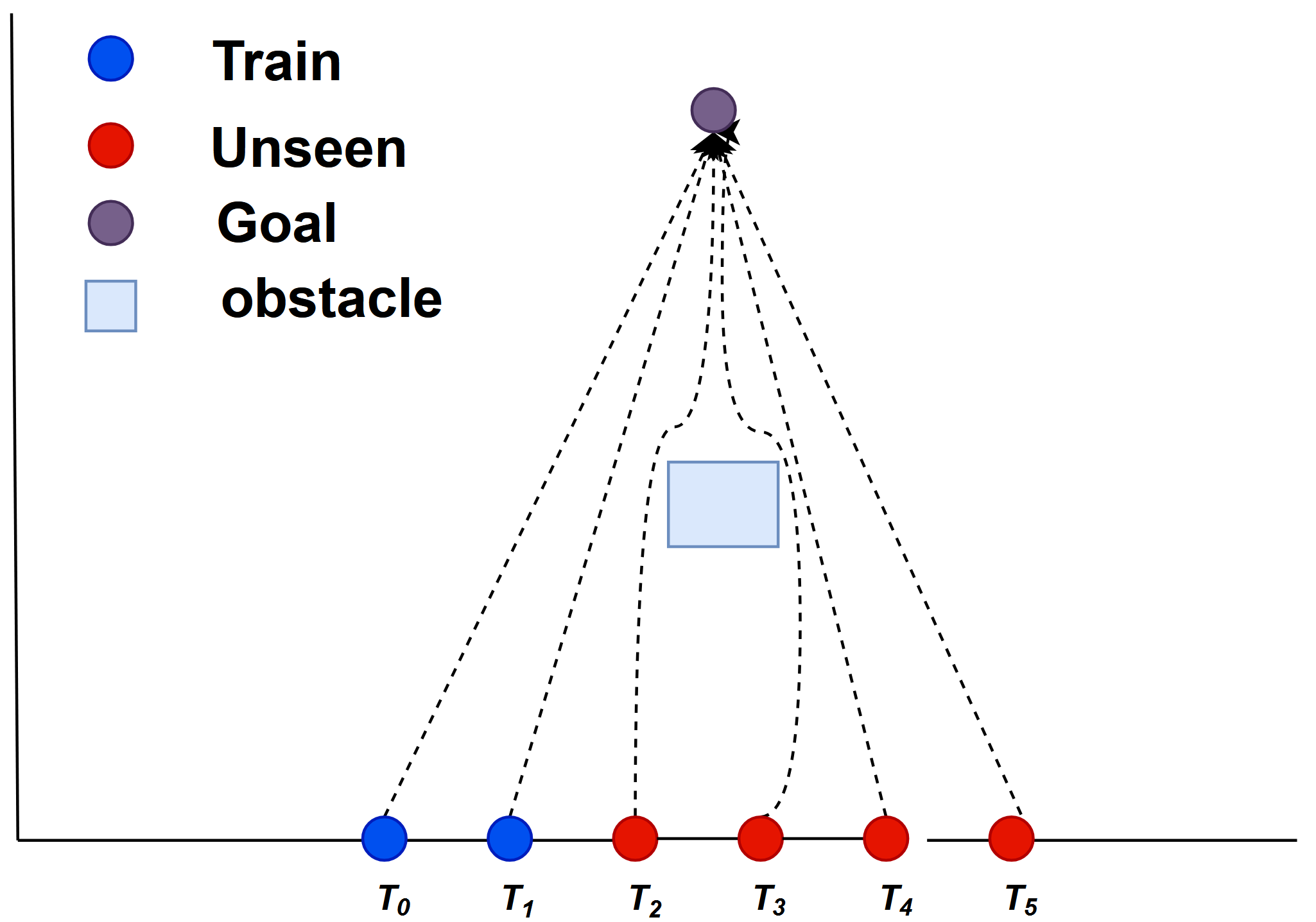}
        \caption{Car: Moving initial point and stationary goal point with obstacle}
        \label{fig:cpmiobsillus}
    \end{subfigure}\hfill
    \begin{subfigure}[t]{0.30\textwidth}
        \centering
        \includegraphics[width=\textwidth]{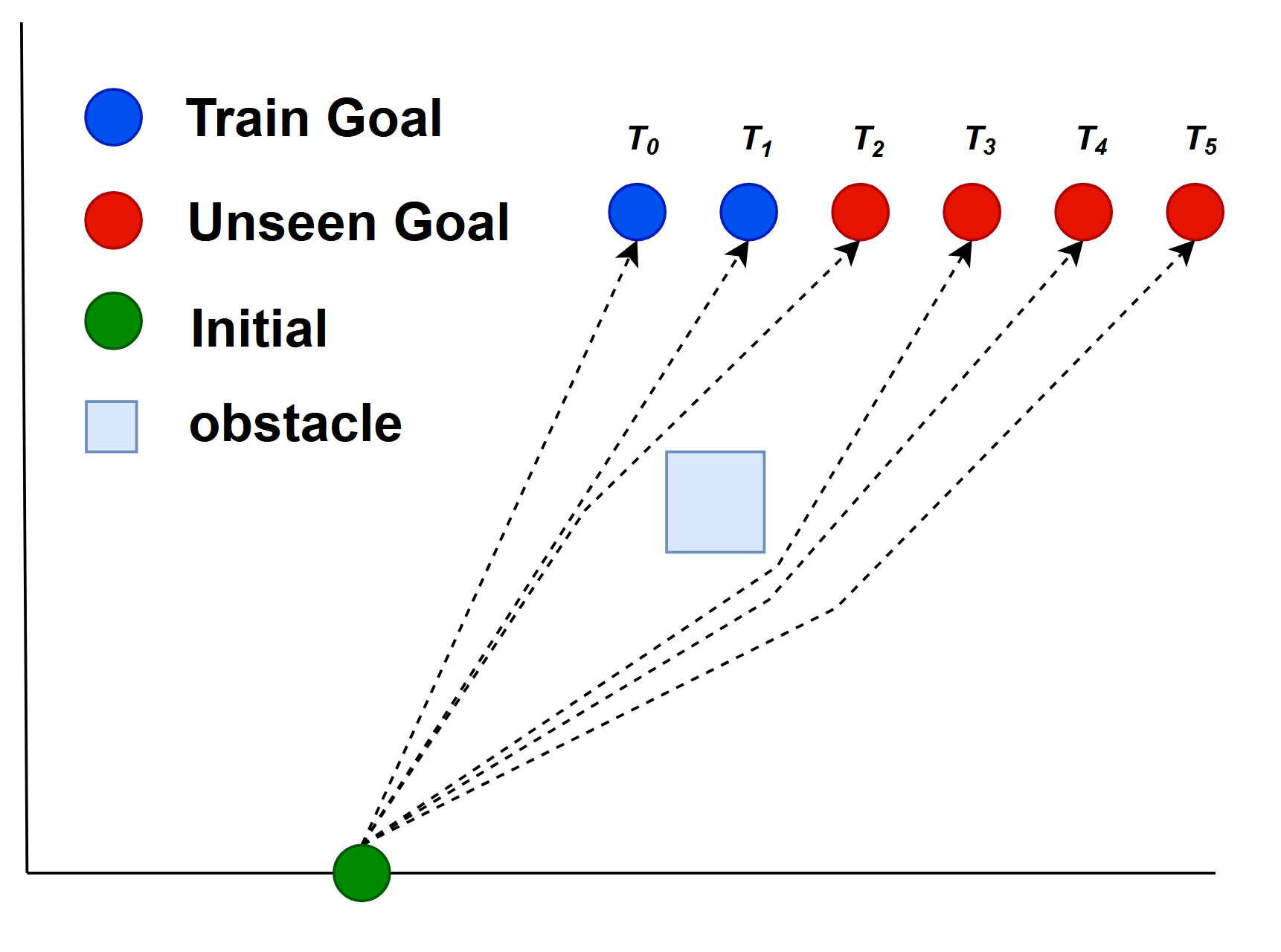}
        \caption{Car: Stationary initial point and moving goal point with obstacle}
        \label{fig:cpmgobsillus}
    \end{subfigure}\hfill
   \begin{subfigure}[t]{0.25\textwidth}
       \centering
       \includegraphics[width=\textwidth]{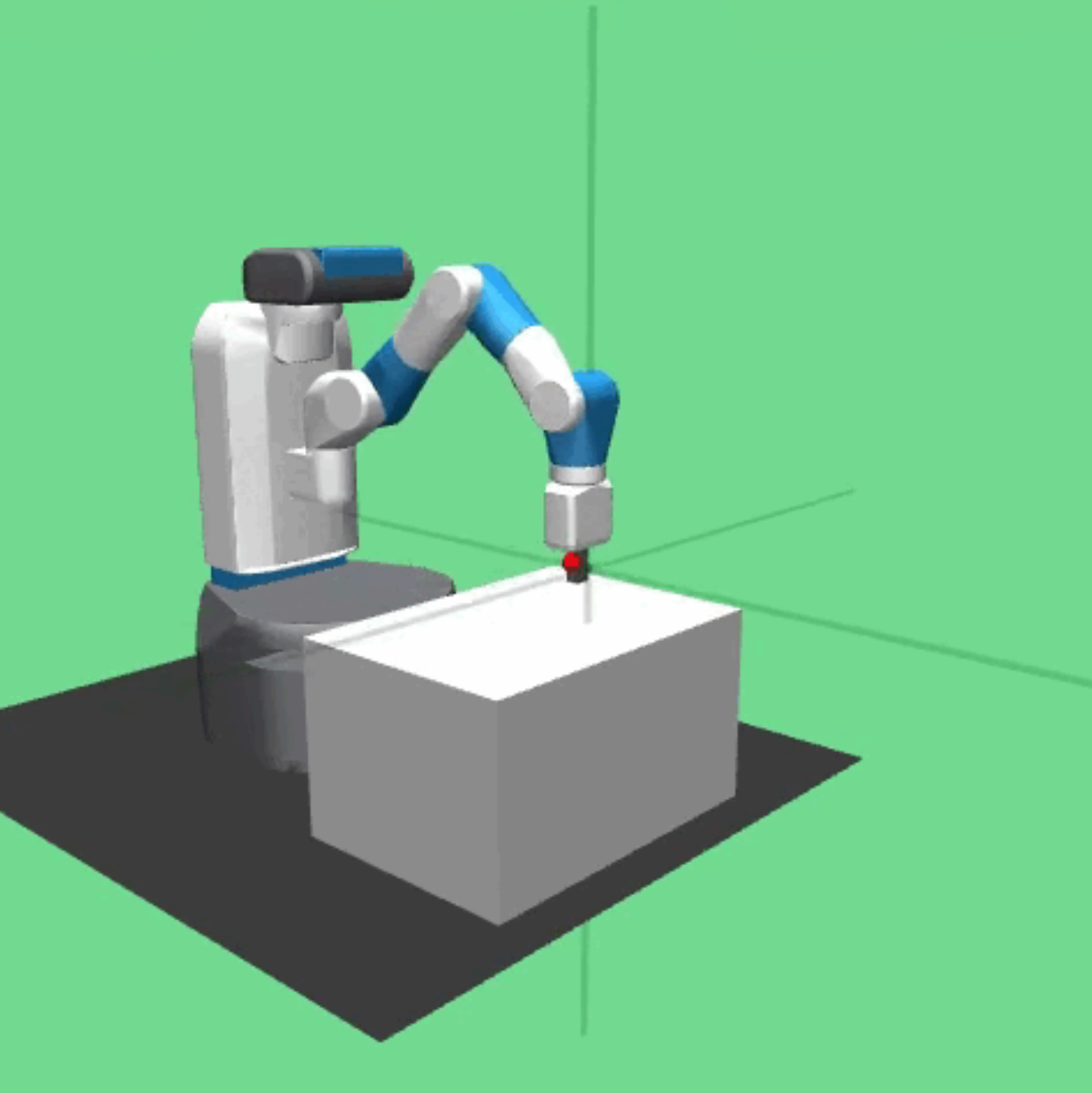}
       \caption{Fetch-Reach - Moving initial point and stationary goal point without obstacle}
       \label{fig:fetchillus}
   \end{subfigure}
    \caption{Illustrations of tasks in the Car environment and the Fetch-Reach environment.}
    \label{fig:side_by_side_figures}
\end{figure*}

\begin{figure*}[t]
    \centering

    \begin{subfigure}[t]{0.49\textwidth}
        \centering
        \includegraphics[width=\textwidth]{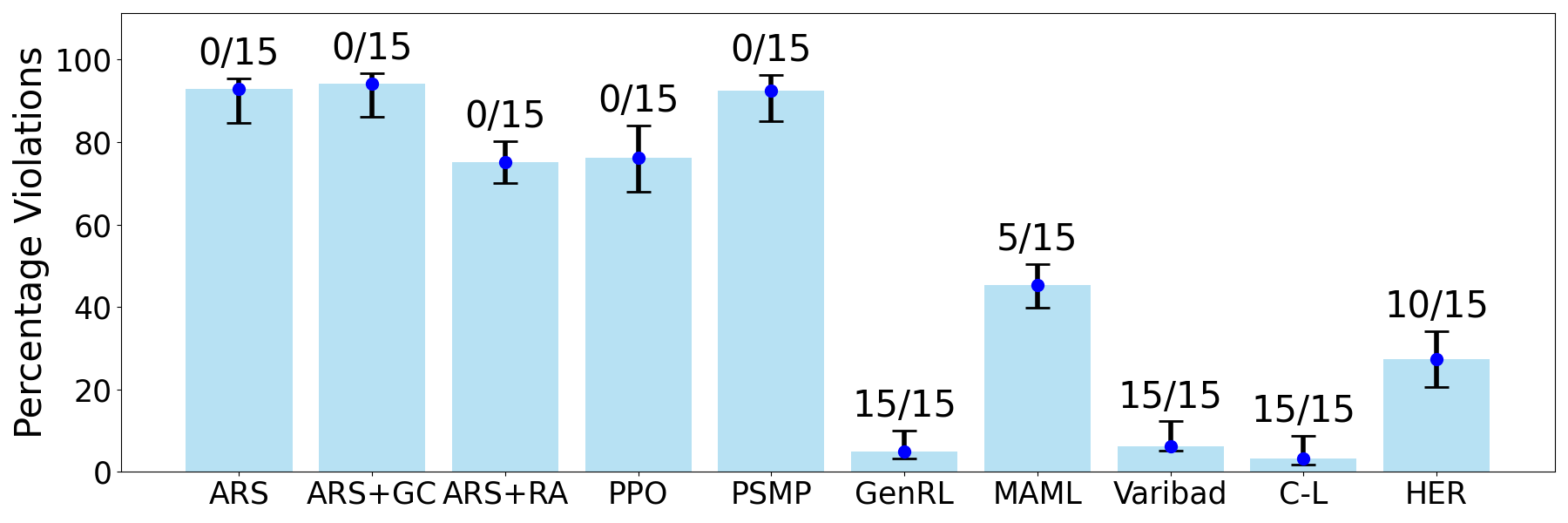}
        \caption{Moving Initial Point (No Obstacle)}
        \label{fig:initial_point}
    \end{subfigure}
    \hfill
    \begin{subfigure}[t]{0.49\textwidth}
        \centering
        \includegraphics[width=\textwidth]{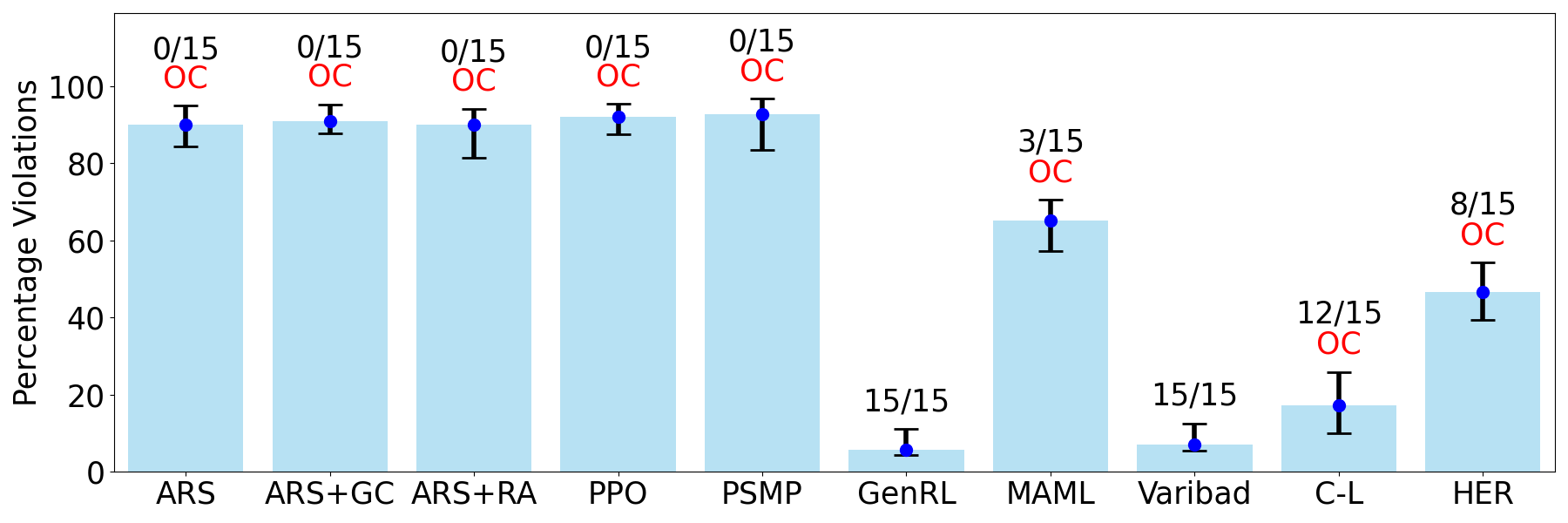}
        \caption{Moving Initial Point (With Obstacle)}
        \label{fig:initial_point_obs}
    \end{subfigure}
 
    \vspace{0.5em}
    
    \begin{subfigure}[t]{0.49\textwidth}
        \centering
        \includegraphics[width=\textwidth]{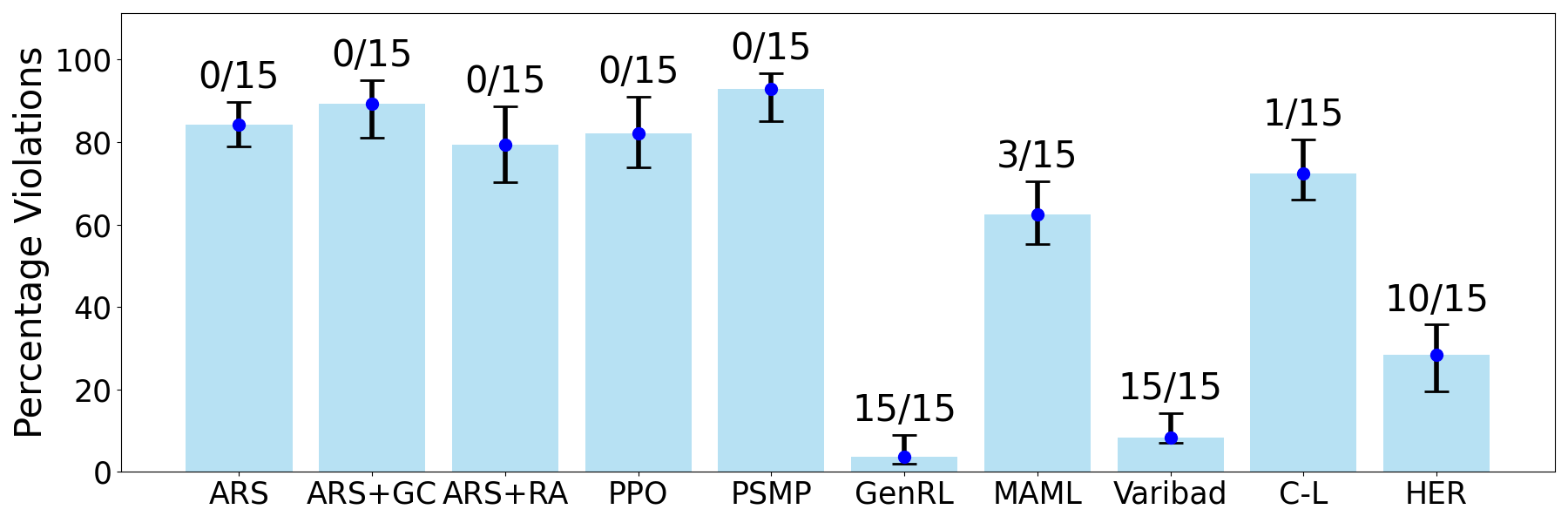}
        \caption{Moving Goal Point (No Obstacle)}
        \label{fig:goal_point}
    \end{subfigure}
    \hfill
    \begin{subfigure}[t]{0.49\textwidth}
        \centering
        \includegraphics[width=\textwidth]{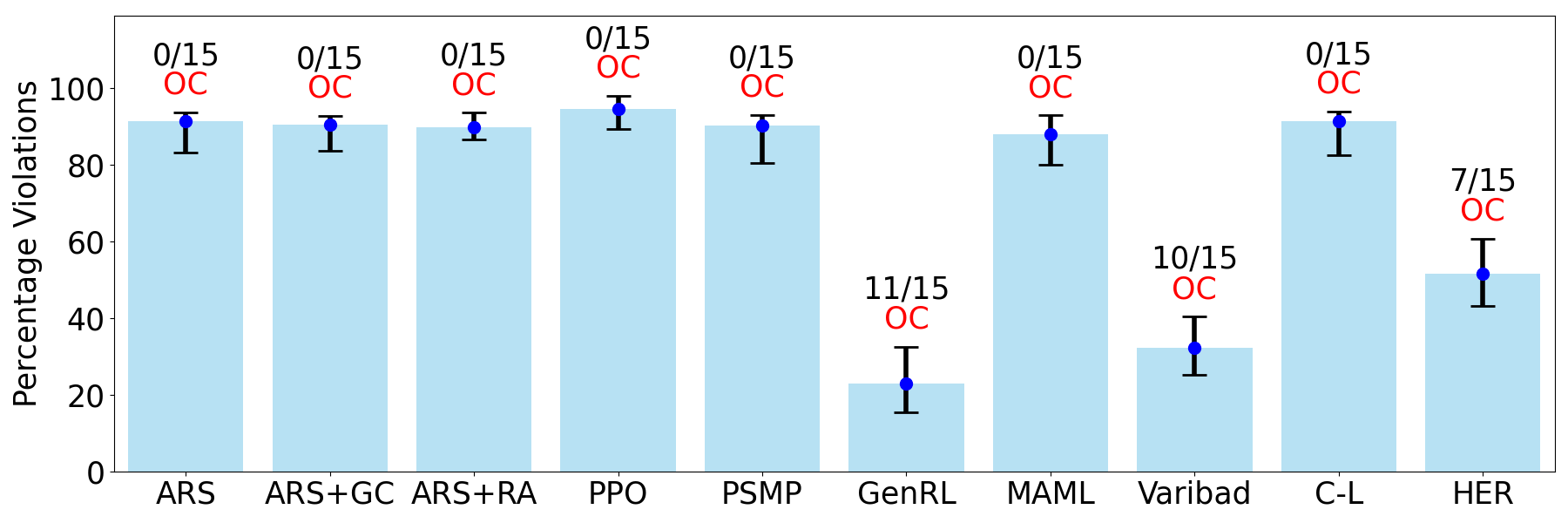}
        \caption{Moving Goal Point (With Obstacle)}
        \label{fig:goal_point_obs}
    \end{subfigure}

    \vspace{0.5em}
    
    \begin{subfigure}[t]{0.49\textwidth}
        \centering
        \includegraphics[width=\textwidth]{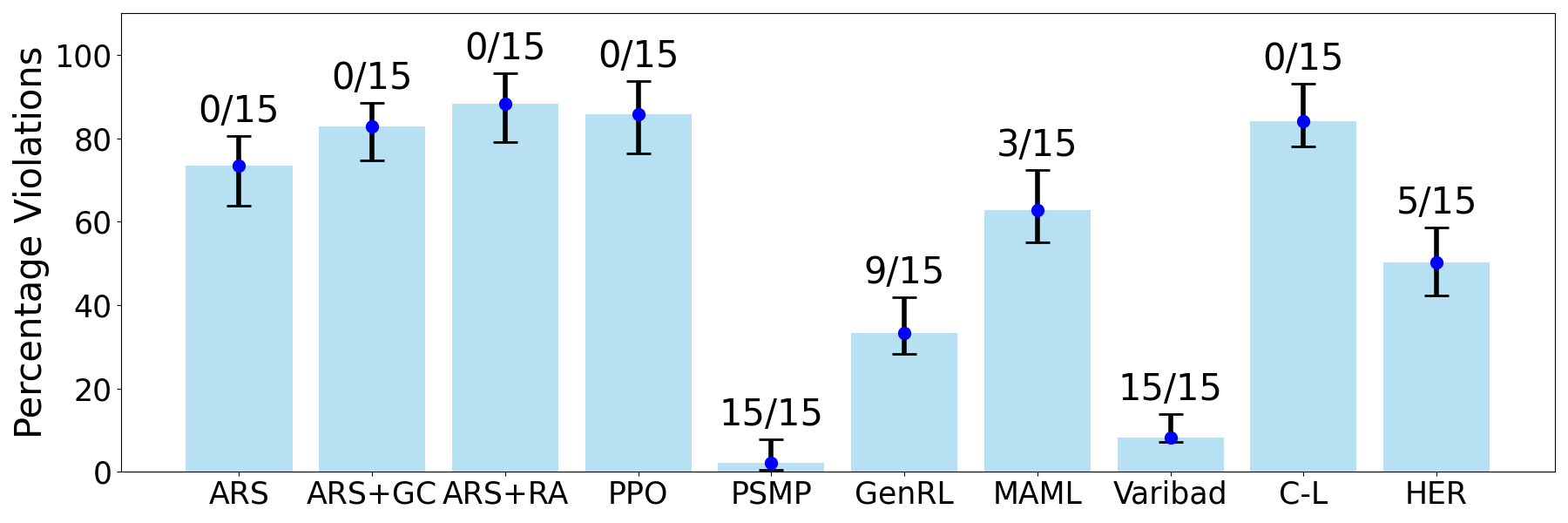}
        \caption{Moving Initial and Goal Point (No Obstacle)}
        \label{fig:initialandgoal_point}
    \end{subfigure}
    \hfill
    \begin{subfigure}[t]{0.49\textwidth}
        \centering
        \includegraphics[width=\textwidth]{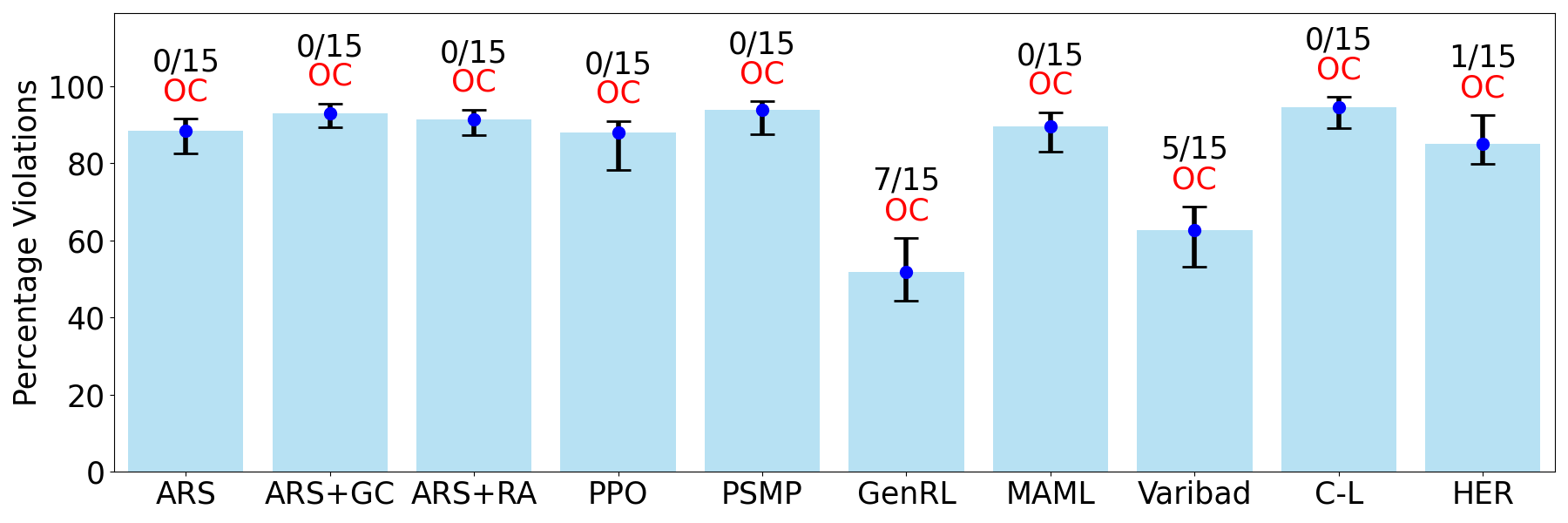}
        \caption{Moving Initial and Goal Point (With Obstacle)}
        \label{fig:initialandgoal_point_obs}
    \end{subfigure}
    
    \caption{\textbf{Violation plots for the Car environment.}}
    \label{fig:car_comparison_histograms}
\end{figure*}

\begin{figure*}[t]
    \centering
    \begin{subfigure}[t]{0.4\textwidth}
        \centering
        \includegraphics[width=\textwidth]{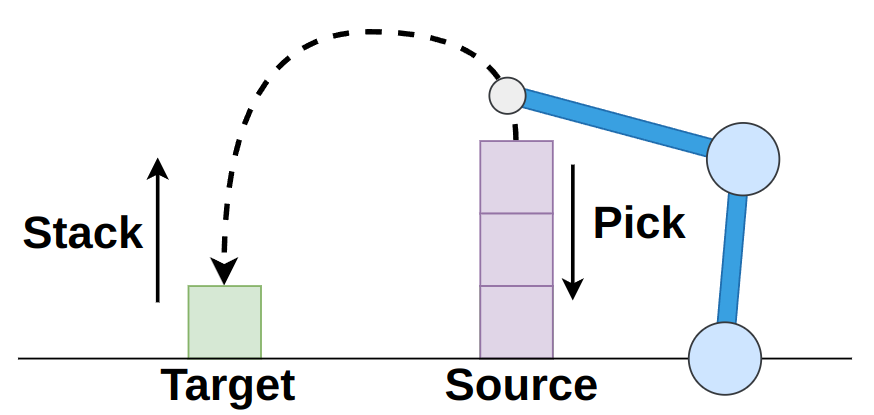}
        \caption{}
        \label{fig:reacher_illus}
    \end{subfigure}
    \hfill
    \begin{subfigure}[t]{0.55\textwidth}
        \centering
        \includegraphics[width=\textwidth]{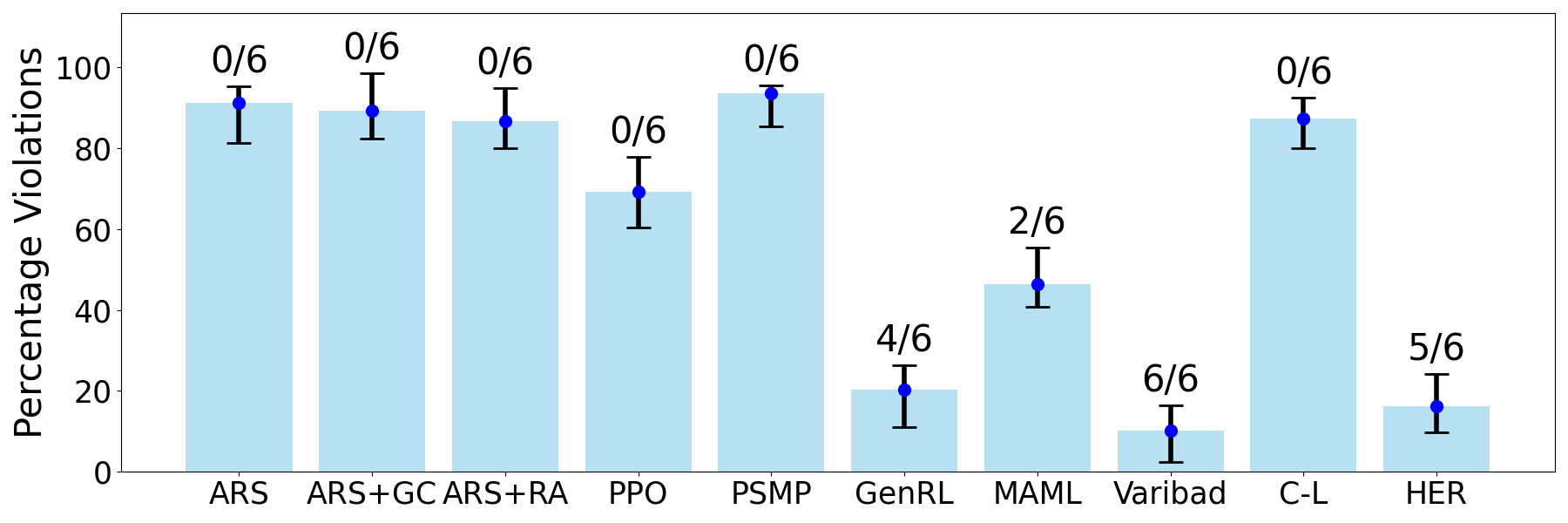}
        \caption{}
        \label{fig:reacher_robot}
    \end{subfigure}
    \caption{(a) Illustration of the task in the Reacher environment. (b) Violation plots for the Reacher environment.}
    \label{results:reacher:appendix}
\end{figure*}

\begin{figure*}[t]
    \centering
    
    \begin{subfigure}[t]{0.45\textwidth}
        \centering
        \includegraphics[width=\textwidth]{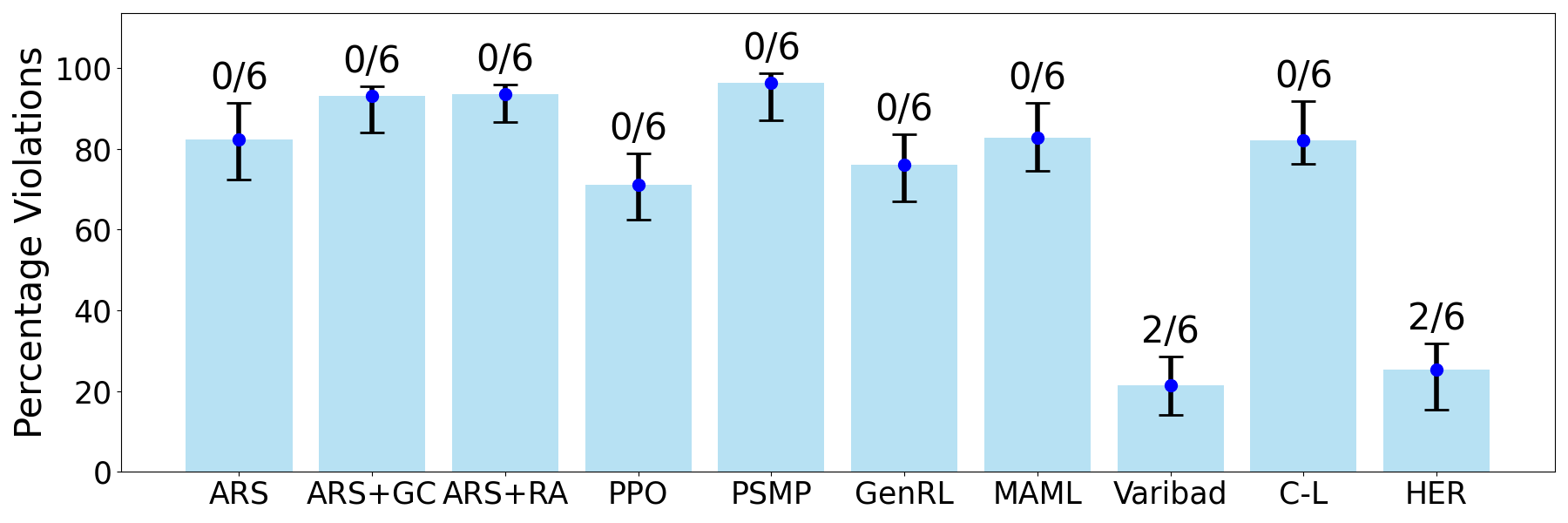}
        \caption{No Obstacle}
        \label{fig:fetch_reach}
    \end{subfigure}\hfill
    \begin{subfigure}[t]{0.45\textwidth}
        \centering
        \includegraphics[width=\textwidth]{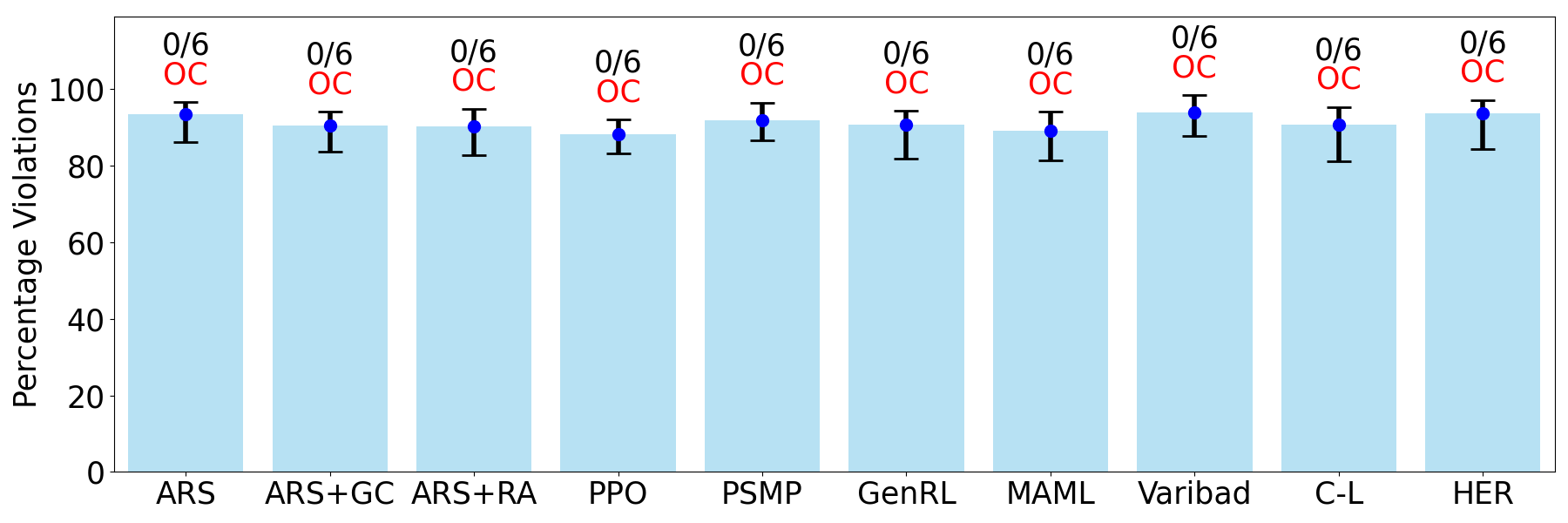}
        \caption{With Obstacle}
        \label{fig:fetch_reach_obs}
    \end{subfigure}
    
    \caption{{Violation plots for the Fetch-Reach environment.}}
    \label{fig:fetch_comparison_histograms}
\end{figure*}

\section{Experimental Evaluation}
\label{sec:experiments}

The purpose of the empirical evaluation is to demonstrate that our certification procedure can serve as a litmus test to examine the generalizability of RL algorithms. 

\subsection{Experimental Setup}

\noindent\textbf{Objective and Procedure.} Our objective is to demonstrate that a lower percentage of certificate violations indicates better generalizability of an RL algorithm. We do so by comparing the results of our certificate-based evaluation to ground truth.

Given an RL algorithm $\A$ and an inductive task $\T$, we learn a (generalizable) policy using the first $n$ tasks $\T_0, \T_1, \cdots \T_{n-1}$ of the inductive task. Then, we evaluate the following two metrics to assess generalizability of the learned policy:
\begin{itemize}[leftmargin=10pt]
    \item (Ground Truth) We compute the {\em number of unseen tasks solved} by the learned policy. That is, we consider how many tasks starting from $\T_{n}, \T_{n+1},\cdots $ the policy can solve. The larger the number of unseen tasks the policy can solve, the better its generalizability. 
    
    \item (Certificate-Guided) We use our framework in Section~\ref{sec:evaluation} to compute the percentage of certificate violations. The lower the percentage, the better its generalizability. 
\end{itemize}
By demonstrating a strong correlation between the two metrics, we will confirm the effectiveness of our framework in assessing generalization. 

To ensure fairness, all RL algorithms are trained on identical tasks from the
inductive family. In the ground truth evaluation, all algorithms are tested
on the same unseen tasks. For each inductive task, the certificate is trained
once and used to evaluate all algorithms.

\noindent\textbf{RL Algorithms.} We evaluate a diverse set of state-of-the-art generalizable RL algorithms that have the ability to generalize to similar but different tasks (for details, see Appendix~\ref{app:hyperparams}):
\begin{itemize}[leftmargin=*]
    \item Standard (Non-Generalizable) RL Algorithms, such as ARS and its variants (ARS + Reward Aggregation and ARS + Goal Conditioning) and PPO ~\citep{schulman2017proximal}, which serve as strong baselines despite lacking explicit generalization mechanisms. 
    \item Generalizable RL algorithms across three categories:
    \begin{itemize}[leftmargin=*]
        \item {\em Inductive Generalization algorithms} PSMP~\citep{inala2020synthesizing} and GenRL-ARS~\citep{SubramanianKRB24}, which leverage inductive structures between tasks; 
        \item {\em Meta-Learning algorithms} MAML-Reinforce~\citep{finn2017model} and VariBAD-A2C~\citep{zintgraf2021varibad}, enabling rapid adaptation through task-specific representations;  
        \item {\em Goal-Conditioned algorithms} C-Learning~\citep{naderian2021c} and HER-DDPG~\citep{andrychowicz2017hindsight}, which specialize in goal-conditioned generalization.
\end{itemize}
\end{itemize}

\noindent\textbf{Environments and Inductive Tasks.} 
In the interest of space, we discuss our results on three environments of increasing difficulty: 
1) \textbf{Car} environment~\citep{inala2020synthesizing}, which involves moving a rectangular car from an initial location to a goal location in a continuous 2D plane; 
2) \textbf{Reacher two-arm} environment, which involves controlling a planar two-link robotic arm to move its end effector to a target position; and 
3) \textbf{Fetch-Reach} environment~\citep{brockman2016openai}, which involves a 7-DOF robotic arm moving its end effector from an initial position to a goal position in a continuous 3D space. A more detailed description of the environments, task families, and experimental setup is provided in Appendix~\ref{app:expset}.

Illustrations of inductive reach-avoid tasks for the Car environment are given in Fig.~\ref{fig:r4illus} and Fig.~\ref{fig:side_by_side_figures}. The inductive reach-avoid task for the Fetch-Reach environment is to navigate from a moving initial location to a stationary goal location, similar to Fig.~\ref{fig:side_by_side_figures}. We also consider the {\em reachability } variant of all these tasks where the obstacle is absent. 
Full descriptions of environments, tasks, and results on more environments are presented in Appendices~\ref{app:training},~\ref{app:hyperparams} and~\ref{app:expset}.


\noindent\textbf{How to read {\em Violation Plots}. } The results are presented using {\em violation plots} that display the percentage of certificate violations and the number of successfully solved tasks for various RL algorithms (see Figs.~\ref{fig:car_comparison_histograms}, ~\ref{fig:fetch_comparison_histograms} and~\ref{results:reacher:appendix}). Each bar shows the mean percentage of violations over 10 runs with error bars indicating the variance. The number of solved tasks (e.g., $2/15$) is annotated above the bar, providing ground-truth evidence for the correctness of the certificate. This means that the algorithm generalized to 2 out of 15 unseen tasks. High violation rates correspond to poor generalization, as indicated by a low number of solved tasks, while low violation rates indicate better generalizability, as indicated by a high number of solved tasks. For reach-avoid tasks, "\textcolor{red}{OC}" (Obstacle Collision) is used to indicate when a trajectory fails by colliding with an obstacle, with all steps after the collision treated as violations.

\subsection{Observations and Results}

\noindent\textbf{Certificate violations are a good measure of generalizability.} All our experiments show that a lower percentage of certificate violations correlates with a higher number of successfully solved unseen tasks, hence indicating better generalizability properties of RL algorithms. Similarly, a higher percentage of certificate violations correlates with a lower number of successfully solved unseen tasks. 
Our evaluation results also align with priors one may have w.r.t. algorithm performance. For instance, standard (non-generalizable) RL approaches ARS and PPO perform poorly in all experiments.

\noindent\textbf{Reach-avoid tasks are hard generalization benchmarks.} Most RL algorithms performed poorly on reach-avoid tasks, as shown by high certificate violations and low solved task counts. In the high-dimensional obstacle environments such as Car and Fetch, even generalizable RL algorithms struggled, with some runs showing over 90\% violations. 
This illustrates that obstacles create a significant challenge for RL generalization. To analyze further, we tested in low-dimensional obstacle environments, such as Car2D, and observed similarly poor performance (Results are shown in Fig.~\ref{results:car2d:appendix} in the Appendix). This highlights that obstacles are inherently difficult constraints, even in simpler settings, underscoring their role as a strict constraint for generalization in locomotion-based navigation tasks. In contrast to reach-avoid tasks (tasks with obstacles), RL algorithms in environments without obstacles demonstrated better performance. Generalizable algorithms, such as GenRL and VariBAD, consistently achieved low violations. 
This underscores the relative simplicity of reachability tasks compared to reach-avoid tasks, where obstacle avoidance is not a factor and algorithms without explicit generalization capabilities perform poorly regardless of the task.

%% file: Section/conclusion.tex
\vspace{-0.5em}
\section{Conclusion}
\label{sec:conclusion}

This work presents a novel framework for evaluating and comparing the generalization capabilities of RL agents. Our framework is based on the proposed notion of certificates of correct generalization, which we use to provide a method for assessing RL agents' performance on unseen tasks. Our results demonstrate that minimizing certificate violations strongly correlates with better generalization, making this approach a reliable litmus test for evaluating RL generalizability. The experiments also confirm the effectiveness of our framework. 

\subsection*{Acknowledgment} Research reported in this publication was partly supported by an Amazon Research Award, Fall 2024

%% file: Appendix/main_appendix.tex
\section*{Appendix}

\input{Appendix/limitations.tex}
\input{Appendix/societal.tex}

\input{Appendix/certificate_algorithm}

\input{Appendix/exp_setup}
\input{Appendix/algorithms}
\input{Appendix/results}

%% file: Appendix/limitations.tex
\section{Limitations}\label{app:limitations}

The framework relies on the availability of high-quality demonstration trajectories to train the certificate function, which may not always be feasible in complex or poorly understood environments. It also assumes access to task-trajectory pairs that are representative of the broader task distribution, limiting its applicability in settings with high task variability or sparse feedback. 

%% file: Appendix/societal.tex
\section{Societal Impact}\label{app:soc}

The proposed framework enables more rigorous evaluation of reinforcement learning generalization, which is essential for deploying AI systems in real-world, safety-critical settings such as robotics and autonomous vehicles. By identifying failure modes and comparing generalization ability across agents, it can help improve reliability and trust in AI.

%% file: Appendix/certificate_algorithm.tex
\section{Training Details}\label{app:training}

\begin{algorithm*}[t]
\caption{Learning Reach-Avoid Certificates}
\label{alg:reachavoidcertificate_training}
\textbf{Input}: Inductive task family $\{\mathcal{T}_i\}_{i=0}^{N}$ with goal sets $\{G_i\}$ and initial state distributions $\{\eta_i\}$. \\
Training tasks: $\mathcal{T}_i$ for $i = 0, \dots, n-1$. \\
Testing tasks: $\mathcal{T}_i$ for $i = n, \dots, N$. \\
\textbf{Output}: Trained certificate function $\mathcal{C}(s, i)$. \\
\begin{algorithmic}[1]
\STATE Generate oracle trajectories $\zeta_i = (s_{i,0}, s_{i,1}, \dots, s_{i,l_i})$ for training tasks $\mathcal{T}_i$ ($i = 0, \dots, n$) to solve their respective goal sets $G_i$.
\STATE Initialize LSTM model $\mathcal{C}(s, i)$ with random weights.

\WHILE{the model has not converged}
\STATE Compute the loss:
\begin{align*}
&\mathsf{Loss} = \sum_{i=0}^{n-1} \sum_{(s_{i,j})_{j=0}^l \in \zeta_i} \Bigg( \\
    &+ \underbrace{\lambda_{\text{safe}} \sum_{j=0}^{l} 
    \max\{0, -C(s_{i,j}, i)\} \cdot 1_{s_{i,j} \in X_s}}_{\text{Penalizes to enforce condition (\ref{reach-avoid-def:nonneg})}} \\
    &+ \underbrace{\sum_{j=0}^{l - 1} 
    \max\{0, C(s_{i,j+1}, i) + \varepsilon_i - C(s_{i,j}, i)\} 
    \cdot 1_{(s_{i,j}, s_{i,j+1}) \in X_s}}_{\text{Penalizes to enforce condition (\ref{reach-avoid-def:sdwt})}} \\ 
    &+ \underbrace{%
    \begin{aligned}[t]
      \sum_{(s_{i+1,j})_{j=0}^l \in \zeta_{i+1}}& 
      \max\{0, C(s_{i+1,0}, i+1) + \varepsilon_i - C(s_{i,l}, i)\} \\
      &\cdot 1_{s_{i,l} \in G_i}
    \end{aligned}
    }_{\text{Penalizes to enforce condition (\ref{reach-avoid-def:sdat})}} \\
    &+ \underbrace{\lambda_{\text{unsafe}} \sum_{j=0}^{l} 
    \max\{0, C(s_{i,j}, i)\} \cdot 1_{s_{i,j} \in X_u}}_{\text{Penalizes to enforce condition (\ref{reach-avoid-def:neg})}} 
\Bigg)
\end{align*}
\STATE Perform backpropagation and update the weights of $\mathcal{C}(s, i)$.
\ENDWHILE
\STATE \textbf{return} trained certificate function $\mathcal{C}(s, i)$.
\end{algorithmic}
\end{algorithm*}

The experimental setup includes the following details:

\begin{itemize}
    \item \textbf{Number of Training Tasks:} 5 tasks for Car and Car2D environments, and 4 tasks for Reacher and Fetch environments.
    \item \textbf{Number of Unseen Tasks:} 15 tasks for Car and Car2D environments, and 6 tasks for Reacher and Fetch environments.
    \item \textbf{Number of Repetitions of the Experiments:} 10.
    \item \textbf{Training Compute:} SLURM cluster running Intel Xeon Gold 6226 CPUs, operating at 2.7 GHz with 24 cores per node. Each node is equipped with 192 GB of RAM.
    \item Our codebase is available at \href{https://anonymous.4open.science/r/certificates-B6D7}{https://anonymous.4open.science/r/certificates-B6D7}.
\end{itemize}

Learning hyperparameters for our framework are given in Tables~\ref{tab:hyp_lstm_reach} and~\ref{tab:hyp_lstm_avoid}.

\begin{table*}[!ht]
\centering
\begin{tabular}{|p{3cm}|p{3cm}|p{3cm}|p{3cm}|}
\hline
\textbf{Hyperparameter}         & \textbf{Car-Parking}   & \textbf{Reacher}   & \textbf{Fetch-Reach} \\ \hline
Learning Rate                   & \( 1 \times 10^{-4} \) & \( 1 \times 10^{-4} \) & \( 1 \times 10^{-4} \) \\ \hline
Batch Size                      & 64                     & 64                     & 128 \\ \hline
Sequence Length                 & 20                     & 20                     & 50 \\ \hline
Dropout Rate                    & 0.2                    & 0.2                    & 0.2 \\ \hline
Optimizer                       & Adam                   & Adam                   & Adam \\ \hline
Gradient Clipping               & \( \pm 0.5 \)          & \( \pm 0.5 \)          & \( \pm 0.5 \) \\ \hline
Network Architecture            & 2 hidden LSTM layers, 32 nodes (\texttt{tanh}); & 2 hidden LSTM layers, 32 nodes (\texttt{tanh}); & 3 hidden LSTM layers, 64 nodes (\texttt{tanh}); \\ 
                                & output layer (\texttt{ReLU})               & output layer (\texttt{ReLU})               & output layer (\texttt{ReLU}) \\ \hline

\end{tabular}
\caption{Hyperparameters for the LSTM-based Reachability Certificate Function}
\label{tab:hyp_lstm_reach}
\end{table*}

\begin{table*}[!ht]
\centering
\begin{tabular}{|p{3cm}|p{3cm}|p{3cm}|p{3cm}|}
\hline
\textbf{Hyperparameter}         & \textbf{Car-Parking}   & \textbf{Reacher}   & \textbf{Fetch-Reach} \\ \hline
Learning Rate                   & \( 1 \times 10^{-4} \) & \( 1 \times 10^{-4} \) & \( 1 \times 10^{-4} \) \\ \hline
Batch Size                      & 64                     & 64                     & 128 \\ \hline
Sequence Length                 & 20                     & 20                     & 50 \\ \hline
Dropout Rate                    & 0.2                    & 0.2                    & 0.2 \\ \hline
Optimizer                       & Adam                   & Adam                   & Adam \\ \hline
Gradient Clipping               & \( \pm 0.5 \)          & \( \pm 0.5 \)          & \( \pm 0.5 \) \\ \hline
Network Architecture            & 2 hidden LSTM layers, 32 nodes (\texttt{tanh}); & 2 hidden LSTM layers, 32 nodes (\texttt{tanh}); & 3 hidden  LSTM layers, 64 nodes (\texttt{tanh}); \\ 
                                & output layer                & output layer                & output layer \\
                                & (no activation) & (no activation) & (no activation) \\ \hline
\end{tabular}
\caption{Hyperparameters for the LSTM-based Reach-Avoid Certificate Function}
\label{tab:hyp_lstm_avoid}
\end{table*}

%% file: Appendix/exp_setup.tex
\section{Experimental Setup}\label{app:expset}
Our approach is evaluated across several environments, including the Car, Car2D, Reacher Two-Arm Robot, and Fetch-Reach settings, which feature diverse tasks and specifications. All tasks involve moving from an initial state to a target state where reachability is the main objective, with the environment subject to different dynamics and state spaces.

The \textbf{Car environment} has a 4-dimensional state space, including x and y positions, orientation, and velocity, with 2 actions: velocity and steering direction. The \textbf{Car2D environment} has a simplified 2-dimensional state space and observation space, consisting of the x and y coordinates of the agent, with 2 actions: velocity and direction. The \textbf{OpenAI Fetch-Reach environment} involves a robotic arm with an 8-dimensional state space, including the x, y, z positions of the end effector, state of the left and right gripper, and the x, y, z velocities of the end effector, with 3 actions: displacement along the x, y, and z axes. The \textbf{Reacher Two-Arm Robot environment} uses a 2-dimensional state space representing the x and y coordinates of the end effector, with 2 actions controlling the angles of the two joints.

The inductive tasks in these environments are systematically defined by updating the initial distributions and target goals according to the functions $ \text{update}\_{\text{init}} $ and $ \text{update}\_{\text{goal}} $, respectively:

\begin{itemize}
    \item \textbf{Car \& Car2D Environment} (Fig.~\ref{fig:motivating_example_cpmgiobs}): 
    \begin{itemize}
        \item Setting 1: the initial distribution is updated by shifting the initial position along the x-axis by a fixed value $ C $, i.e.,
        \[
        \eta_{i+1}(s) = \eta_i(s + (C, 0)).
        \]
        \[
        G_{i+1} = G_i + (0, 0).
        \]
        \item Setting 2: the goal is updated by shifting the target position along the x-axis by $ C $, i.e.,
        \[
        \eta_{i+1}(s) = \eta_i(s + (0, 0)).
        \]
        \[
        G_{i+1} = G_i + (C, 0).
        \]
        \item Setting 3: both the initial and target positions are shifted along the x-axis by $C$, combining both updates.

        \[
        \eta_{i+1}(s) = \eta_i(s + (C, 0)).
        \]
        \[
        G_{i+1} = G_i + (C, 0).
        \]
    \end{itemize}

   \item \textbf{Reacher two-arm robot environment} (Fig.~\ref{fig:reacher_illus}):  The goal is to move a box from height $ H - C $ in the \textit{Source} stack to height $ C + 1 $ in the \textit{Target} stack where H is the initial height of the source tower.

    \[
    \eta_{i+1}(s) = \eta_i(s + (0, H - C)).
    \]
    \[
    G_{i+1} = G_{i} + (0, C + 1).
    \]
    

    \item \textbf{Fetch-Reach Environment}: In this environment, the inductive tasks are defined by moving the initial distribution along the $y$-axis by a fixed increment $C$, while keeping the goal region fixed, i.e.,
\[
    \eta_{i+1}(s) = \eta_i(s + (0, C, 0)).
\]
\[
    G_{i+1} = G_i.
\]
Thus, each successive task starts from a shifted initial position, but the target goal remains stationary. 
\end{itemize}


%% file: Appendix/algorithms.tex
\section{Algorithms and Hyperparameters}
\label{app:hyperparams}

The evaluated algorithms are categorized as follows:

\begin{itemize}[leftmargin=15pt]
    \item \textbf{Standard RL Algorithms:}  
    \textbf{ARS}, \textbf{ARS+GC}, \textbf{ARS+RA}, and \textbf{PPO} are baseline reinforcement learning methods known for their effectiveness in diverse tasks. These algorithms lack explicit mechanisms for generalization, focusing instead on optimizing performance in specific tasks.

    \item \textbf{Inductive Generalization Algorithms:}  
    \textbf{PSMP} and \textbf{GenRL} leverage inductive biases to generalize across tasks with structural similarities. These methods enhance transferability by exploiting shared properties within task families.

    \item \textbf{Meta-Learning Algorithms:}  
    \textbf{MAML} and \textbf{Varibad} are designed for rapid adaptation to new tasks with minimal training. \textbf{MAML} employs gradient-based meta-learning, while \textbf{Varibad} uses variational inference for task embeddings, enabling zero-shot generalization.

    \item \textbf{Goal-Conditioned RL Algorithms:}  
    \textbf{C-Learning} and \textbf{HER} explicitly incorporate goals into policy learning, optimizing for tasks requiring specific goal states. These algorithms are well-suited for environments with dynamic or goal-driven objectives.
\end{itemize}

Hyperparameters used for these algorithms are given in Tables~\ref{tab:hyper_ars} to~\ref{tab:hyper_her}

\begin{table*}[!ht]
\centering
\begin{tabular}{|p{3cm}|p{3cm}|p{3cm}|p{3cm}|}
\hline
\textbf{Hyperparameter}         & \textbf{Car \& Car2D}   & \textbf{Reacher}   & \textbf{Fetch-Reach} \\ \hline
Episode Length               & 30                     & 30                     & 100 \\ \hline
Learning Rate                   & \( 1 \times 10^{-4} \) & \( 1 \times 10^{-4} \) & \( 1 \times 10^{-4} \) \\ \hline
Number of Directions Sampled    & 30                     & 30                     & 30 \\ \hline
Number of Top Samples Used      & 8                      & 8                      & 8 \\ \hline
Network Architecture            & 2 hidden layers, 32 nodes (\texttt{ReLU}), & 2 hidden layers, 32 nodes (\texttt{ReLU}), & 3 hidden layers, 64 nodes (\texttt{ReLU}), \\ 
                                & output layer (\texttt{tanh})               & output layer (\texttt{tanh})               & output layer (\texttt{tanh}) \\ \hline
\end{tabular}
\caption{Hyperparameters for ARS, ARS + GC, ARS + RA}
\label{tab:hyper_ars}
\end{table*}

\begin{table*}[!ht]
\centering
\begin{tabular}{|p{3cm}|p{3cm}|p{3cm}|p{3cm}|}
\hline
\textbf{Hyperparameter}         & \textbf{Car \& Car2D}   & \textbf{Reacher}   & \textbf{Fetch-Reach} \\ \hline
Episode Length               & 30                     & 30                     & 100 \\ \hline
Learning Rate                   & \( 1 \times 10^{-4} \) & \( 1 \times 10^{-4} \) & \( 1 \times 10^{-4} \) \\ \hline
Batch Size                      & 64                     & 64                     & 256 \\ \hline
Discount Factor   & 0.99                   & 0.99                   & 0.99 \\ \hline
GAE Parameter    & 0.95                   & 0.95                   & 0.95 \\ \hline
Clipping Parameter  & 0.2               & 0.2                   & 0.2 \\ \hline
Value Loss Coefficient  & 0.5                & 0.5                   & 0.5 \\ \hline
Entropy Coefficient  & 0.01                  & 0.01                  & 0.01 \\ \hline
Replay Buffer Size              & \( 10^4 \)            & \( 10^4 \)            & \( 10^4 \) \\ \hline
Network Architecture            & 2 hidden layers, 32 nodes (\texttt{ReLU}), & 2 hidden layers, 32 nodes (\texttt{ReLU}), & 3 hidden layers, 64 nodes (\texttt{ReLU}), \\ 
                                & output layer (\texttt{tanh})               & output layer (\texttt{tanh})               & output layer (\texttt{tanh}) \\ \hline
\end{tabular}
\caption{Hyperparameters for PPO}
\label{tab:hyper_ppo}
\end{table*}

\begin{table*}[!ht]
\centering
\begin{tabular}{|p{3cm}|p{3cm}|p{3cm}|p{3cm}|}
\hline
\textbf{Hyperparameter}         & \textbf{Car \& Car2D}   & \textbf{Reacher}   & \textbf{Fetch-Reach} \\ \hline
Episode Length               & 30                     & 30                     & 100 \\ \hline
Learning Rate                   & \( 1 \times 10^{-4} \) & \( 1 \times 10^{-4} \) & \( 1 \times 10^{-4} \) \\ \hline
Batch Size                      & 64                     & 64                     & 256 \\ \hline
Maximum Number of Modes         & 2                      & 2                      & 2 \\ \hline
\( \lambda \)                   & 100                    & 100                    & 100 \\ \hline
Student Action Function Grammar & Proportional           & Proportional           & Proportional \\ \hline
Switching Condition Grammar     & Boolean tree (depth 1 or 2) & Boolean tree (depth 1 or 2) & Boolean tree (depth 1 or 2) \\ \hline
Sampling for Teacher Initialization & 10 top trajectories (CEM) & 10 top trajectories (CEM) & 10 top trajectories (CEM) \\ \hline
Network Architecture            & 2 hidden layers, 32 nodes (\texttt{ReLU}), & 2 hidden layers, 32 nodes (\texttt{ReLU}), & 3 hidden layers, 64 nodes (\texttt{ReLU}), \\ 
                                & output layer (\texttt{tanh})               & output layer (\texttt{tanh})               & output layer (\texttt{tanh}) \\ \hline
\end{tabular}
\caption{Hyperparameters for PSMP}
\label{tab:hyper_PSMP}
\end{table*}

\begin{table*}[!ht]
\centering
\begin{tabular}{|p{3cm}|p{3cm}|p{3cm}|p{3cm}|}
\hline
\textbf{Hyperparameter}         & \textbf{Car \& Car2D}   & \textbf{Reacher}   & \textbf{Fetch-Reach} \\ \hline
Learning Rate                   & \( 1 \times 10^{-4} \) & \( 1 \times 10^{-4} \) & \( 1 \times 10^{-4} \) \\ \hline
Number of Directions Sampled    & 30                     & 30                     & 30 \\ \hline
Number of Top Samples Used      & 8                      & 8                      & 8 \\ \hline
Reward Aggregation Scheme       & softmin                & softmin                & softmin \\ \hline
Batch Size                      & 64                     & 64                     & 256 \\ \hline
Network Architecture            & 2 hidden layers, 32 nodes (\texttt{ReLU}), & 2 hidden layers, 32 nodes (\texttt{ReLU}), & 3 hidden layers, 64 nodes (\texttt{ReLU}), \\ 
                                & output layer (\texttt{tanh})               & output layer (\texttt{tanh})               & output layer (\texttt{tanh}) \\ \hline
\end{tabular}
\caption{Hyperparameters for GenRL-ARS}
\label{tab:hyper_genrl}
\end{table*}

\begin{table*}[!ht]
\centering
\begin{tabular}{|p{3cm}|p{3cm}|p{3cm}|p{3cm}|}
\hline
\textbf{Hyperparameter}         & \textbf{Car \& Car2D}   & \textbf{Reacher}   & \textbf{Fetch-Reach} \\ \hline
Episode Length               & 30                     & 30                     & 100 \\ \hline
Learning Rate                   & \( 1 \times 10^{-4} \) & \( 1 \times 10^{-4} \) & \( 1 \times 10^{-4} \) \\ \hline
Batch Size                      & 64                     & 64                     & 256 \\ \hline
Number of Meta-Tasks            & 4                      & 5                      & 5 \\ \hline
Number of Inner Loop Steps      & 0                      & 0                      & 0 \\ \hline
Discount Factor   & 0.99                   & 0.99                   & 0.99 \\ \hline
Entropy Coefficient             & 0.01                   & 0.01                   & 0.01 \\ \hline
Policy Network Architecture     & 2 hidden layers, 32 nodes (\texttt{ReLU}), & 2 hidden layers, 32 nodes (\texttt{ReLU}), & 3 hidden layers, 64 nodes (\texttt{ReLU}), \\ 
                                & output layer (\texttt{tanh})               & output layer (\texttt{tanh})               & output layer (\texttt{tanh}) \\ \hline
Optimizer                       & Adam                   & Adam                   & Adam \\ \hline
Gradient Clipping               & \( \pm 0.5 \)          & \( \pm 0.5 \)          & \( \pm 0.5 \) \\ \hline
KL-Divergence Regularization    & Enabled, coefficient = 0.1 & Enabled, coefficient = 0.1 & Enabled, coefficient = 0.1 \\ \hline
Trajectory Length               & 200 steps              & 200 steps              & 200 steps \\ \hline
Meta-Batch Size                 & 4 tasks               & 5 tasks               & 5 tasks \\ \hline
\end{tabular}
\caption{Hyperparameters for MAML-Reinforce}
\label{tab:hyper_maml}
\end{table*}

\begin{table*}[!ht]
\centering
\begin{tabular}{|p{3cm}|p{3cm}|p{3cm}|p{3cm}|}
\hline
\textbf{Hyperparameter}         & \textbf{Car \& Car2D}   & \textbf{Reacher}   & \textbf{Fetch-Reach} \\ \hline
Episode Length               & 30                     & 30                     & 100 \\ \hline
Learning Rate                   & \( 1 \times 10^{-4} \) & \( 1 \times 10^{-4} \) & \( 1 \times 10^{-4} \) \\ \hline
Batch Size                      & 64                     & 64                     & 256 \\ \hline
Epsilon                         & \( 1 \times 10^{-5} \)  & \( 1 \times 10^{-5} \)  & \( 1 \times 10^{-5} \) \\ \hline
Discount Factor   & 0.95                   & 0.95                   & 0.95 \\ \hline
Max Gradient Norm               & 0.5                    & 0.5                    & 0.5 \\ \hline
Value Loss Coefficient          & 0.5                    & 0.5                    & 0.5 \\ \hline
Entropy Coefficient             & 0.01                   & 0.01                   & 0.01 \\ \hline
GAE Parameter       & 0.95                   & 0.95                   & 0.95 \\ \hline
ELBO Loss Coefficient           & 1.0                    & 1.0                    & 1.0 \\ \hline
Task Embedding Size             & 5                      & 5                      & 5 \\ \hline
Network Architecture            & 2 hidden layers, 32 nodes (\texttt{ReLU}), & 2 hidden layers, 32 nodes (\texttt{ReLU}), & 3 hidden layers, 64 nodes (\texttt{ReLU}), \\ 
                                & output layer (\texttt{tanh})               & output layer (\texttt{tanh})               & output layer (\texttt{tanh}) \\ \hline
Encoder Architecture            & Fully connected (40 nodes), GRU (hidden size 64), & Fully connected (40 nodes), GRU (hidden size 64), & Fully connected (40 nodes), GRU (hidden size 64), \\ 
                                & output layer (10 outputs, \texttt{ReLU})   & output layer (10 outputs, \texttt{ReLU})   & output layer (10 outputs, \texttt{ReLU}) \\ \hline
Reward Decoder Architecture     & 2 hidden layers, 32 nodes (\texttt{ReLU}), & 2 hidden layers, 32 nodes (\texttt{ReLU}), & 3 hidden layers, 64 nodes (\texttt{ReLU}), \\ 
                                & output layer (\texttt{tanh})               & output layer (\texttt{tanh})               & output layer (\texttt{tanh}) \\ \hline
Decoder Loss Function           & Binary Cross Entropy   & Binary Cross Entropy   & Binary Cross Entropy \\ \hline
\end{tabular}
\caption{Hyperparameters for Varibad-A2C}
\label{tab:hyper_varibad}
\end{table*}

\begin{table*}[!ht]
\centering
\begin{tabular}{|p{3cm}|p{3cm}|p{3cm}|p{3cm}|}
\hline
\textbf{Hyperparameter}         & \textbf{Car \& Car2D}   & \textbf{Reacher}   & \textbf{Fetch-Reach} \\ \hline
Episode Length               & 30                     & 30                     & 100 \\ \hline
Learning Rate                   & \( 1 \times 10^{-4} \) & \( 1 \times 10^{-4} \) & \( 1 \times 10^{-4} \) \\ \hline
Batch Size                      & 64                     & 64                     & 256 \\ \hline
Gradient Steps per Episode      & 64                     & 64                     & 64 \\ \hline

Network Architecture            & 2 hidden layers, 32 nodes (\texttt{ReLU}), & 2 hidden layers, 32 nodes (\texttt{ReLU}), & 3 hidden layers, 64 nodes (\texttt{ReLU}), \\ 
                                & output layer (\texttt{tanh})               & output layer (\texttt{tanh})               & output layer (\texttt{tanh}) \\ \hline
Target Network Update Frequency & Every 10 steps         & Every 10 steps         & Every 10 steps \\ \hline
Discount Factor   & 0.9                    & 0.9                    & 0.9 \\ \hline
\( \kappa \)                    & 3                      & 3                      & 3 \\ \hline
\end{tabular}
\caption{Hyperparameters for C-Learning}
\label{tab:hyper_cl}
\end{table*}

\begin{table*}[!ht]
\centering
\begin{tabular}{|p{3cm}|p{3cm}|p{3cm}|p{3cm}|}
\hline
\textbf{Hyperparameter}         & \textbf{Car \& Car2D}   & \textbf{Reacher}   & \textbf{Fetch-Reach} \\ \hline
Episode Length               & 30                     & 30                     & 100 \\ \hline
Learning Rate                   & \( 1 \times 10^{-4} \) & \( 1 \times 10^{-4} \) & \( 1 \times 10^{-4} \) \\ \hline
Critic Learning Rate            & \( 1 \times 10^{-3} \) & \( 1 \times 10^{-3} \) & \( 1 \times 10^{-3} \) \\ \hline
Batch Size                      & 64                     & 64                     & 256 \\ \hline
Discount Factor   & 0.98                   & 0.98                   & 0.98 \\ \hline
Replay Buffer Size              & \( 10^4 \)            & \( 10^4 \)            & \( 10^4 \) \\ \hline
Policy Noise                    & 0.2                   & 0.2                   & 0.2 \\ \hline
Target Network Update Rate  & 0.005       & 0.005                 & 0.005 \\ \hline
Hindsight Strategy              & Future strategy (relabel goals) & Future strategy (relabel goals) & Future strategy (relabel goals) \\ \hline
Hindsight Fraction              & 0.8                   & 0.8                   & 0.8 \\ \hline
Network Architecture            & 2 hidden layers, 32 nodes (\texttt{ReLU}), & 2 hidden layers, 32 nodes (\texttt{ReLU}), & 3 hidden layers, 64 nodes (\texttt{ReLU}), \\ 
                                & output layer (\texttt{tanh})               & output layer (\texttt{tanh})               & output layer (\texttt{tanh}) \\ \hline
\end{tabular}
\caption{Hyperparameters for HER-DDPG}
\label{tab:hyper_her}
\end{table*}

%% file: Appendix/results.tex



\begin{figure*}[ht]
    \centering
    
    \begin{subfigure}[b]{0.49\textwidth}
        \centering
        \includegraphics[width=\textwidth]{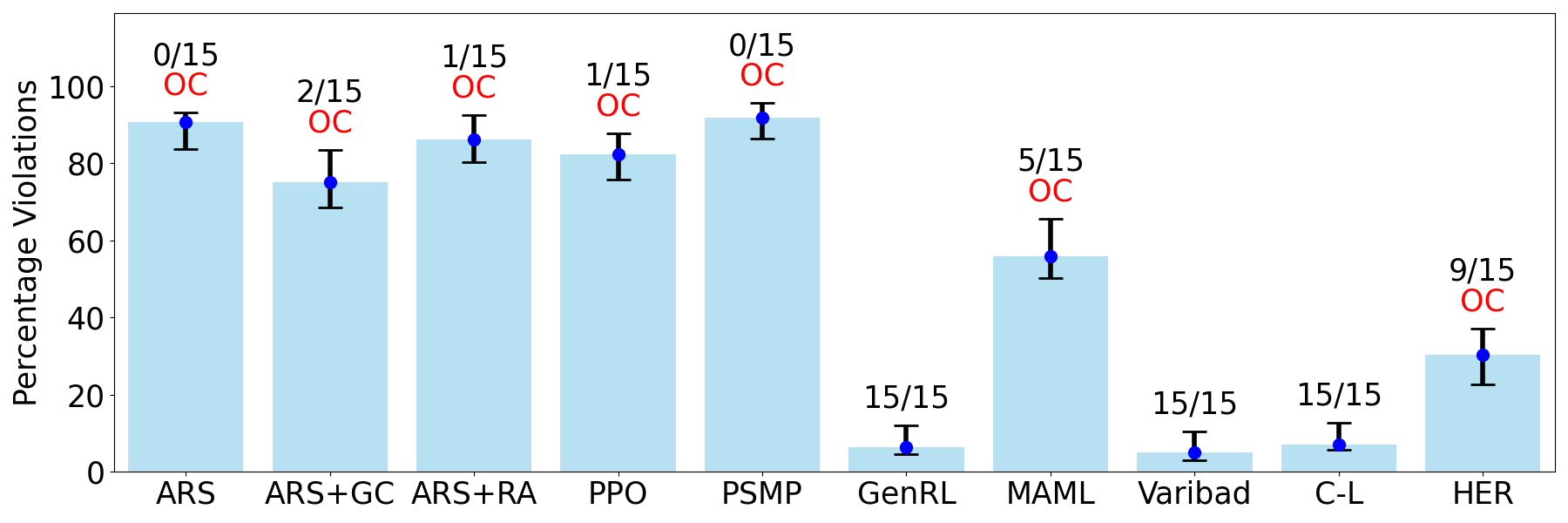}
        \caption{Car2D: Moving Initial Point with obstacle}
        \label{fig:initial_point_obs2d}
    \end{subfigure}
    \hfill
    \begin{subfigure}[b]{0.49\textwidth}
        \centering
        \includegraphics[width=\textwidth]{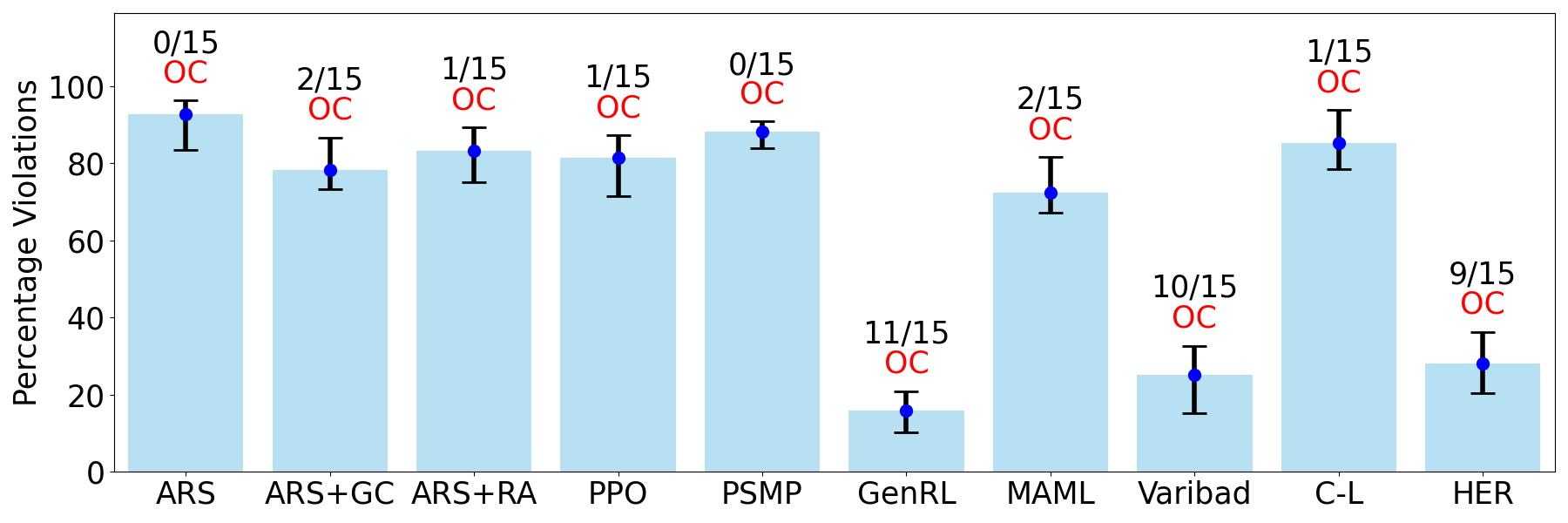}
        \caption{Car2D:  Moving Goal Point with obstacle}
        \label{fig:goal_point_obs2d}
    \end{subfigure}

    \begin{subfigure}[b]{0.49\textwidth}
        \centering
        \includegraphics[width=\textwidth]{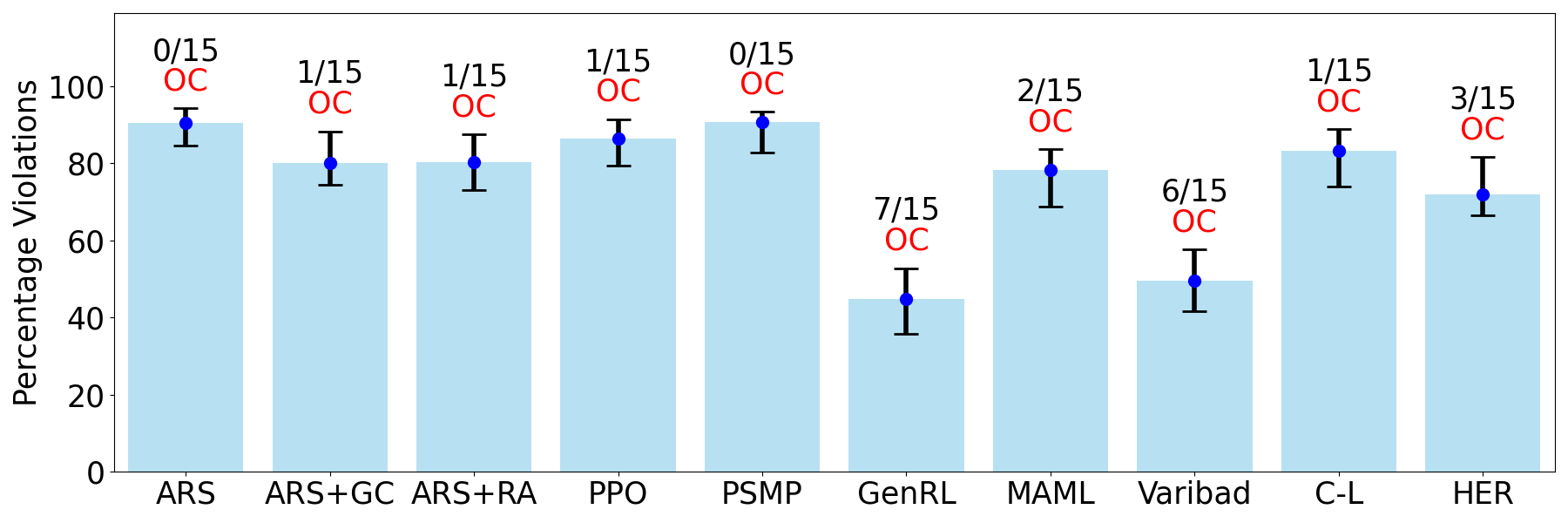}
        \caption{Car2D: Moving Initial and Goal Point with obstacle}
        \label{fig:initialandgoal_point_obs2d}
    \end{subfigure}

    \caption{Performance evaluation of tasks on the simpler Car2D environment. (a)-(c)  present the evaluation results, showing the percentage of certificate violations and the number of successfully solved tasks for various RL algorithms. Each marker represents the mean percentage of violations over 10 runs, with error bars indicating the range (minimum to maximum). The numbers annotated next to each marker (e.g., $0/15$) denote the number of successfully solved tasks out of the total test tasks. Higher violations correspond to poor generalization, while a lower percentage of violations indicates better generalization.}
    \label{results:car2d:appendix}

\end{figure*}

%% file: references.bib
@inproceedings{MalikLR21,
  author    = {Dhruv Malik and Yuanzhi Li and Pradeep Ravikumar},
  title     = {When Is Generalizable Reinforcement Learning Tractable?},
  booktitle = {Advances in Neural Information Processing Systems (NeurIPS)},
  year      = {2021}
}

@article{KirkZGR23,
  author  = {Robert Kirk and Amy Zhang and Edward Grefenstette and Tim Rockt{\"a}schel},
  title   = {A Survey of Zero-shot Generalisation in Deep Reinforcement Learning},
  journal = {Journal of Artificial Intelligence Research},
  volume  = {76},
  year    = {2023},
  pages   = {201--264}
}

@article{Korkmaz24,
  author  = {Ezgi Korkmaz},
  title   = {A Survey Analyzing Generalization in Deep Reinforcement Learning},
  journal = {arXiv preprint arXiv:2401.02349},
  year    = {2024}
}

@inproceedings{ZikelicLHC23,
  author    = {Dorde Zikelic and Mathias Lechner and Thomas A. Henzinger and Krishnendu Chatterjee},
  title     = {Learning Control Policies for Stochastic Systems with Reach-Avoid Guarantees},
  booktitle = {AAAI},
  year      = {2023},
  pages     = {11926--11935}
}

@article{DawsonGF23,
  author  = {Charles Dawson and Sicun Gao and Chuchu Fan},
  title   = {Safe Control With Learned Certificates: A Survey of Neural Lyapunov,
             Barrier, and Contraction Methods for Robotics and Control},
  journal = {IEEE Transactions on Robotics},
  volume  = {39},
  number  = {3},
  pages   = {1749--1767},
  year    = {2023}
}

@inproceedings{ChangRG19,
  author    = {Ya{-}Chien Chang and Nima Roohi and Sicun Gao},
  title     = {Neural Lyapunov Control},
  booktitle = {NeurIPS},
  year      = {2019},
  pages     = {3240--3249}
}

@article{AbateAGP21,
  author  = {Alessandro Abate and Daniele Ahmed and Mirco Giacobbe and Andrea Peruffo},
  title   = {Formal Synthesis of Lyapunov Neural Networks},
  journal = {IEEE Control Systems Letters},
  volume  = {5},
  number  = {3},
  pages   = {773--778},
  year    = {2021}
}

@inproceedings{EdwardsPA24,
  author    = {Alec Edwards and Andrea Peruffo and Alessandro Abate},
  title     = {Fossil 2.0: Formal Certificate Synthesis for the Verification and Control
               of Dynamical Models},
  booktitle = {ACM International Conference on Hybrid Systems: Computation and Control (HSCC)},
  year      = {2024},
  pages     = {26:1--26:10}
}

@inproceedings{Zhang0V023,
  author    = {Hongchao Zhang and Junlin Wu and Yevgeniy Vorobeychik and Andrew Clark},
  title     = {Exact Verification of ReLU Neural Control Barrier Functions},
  booktitle = {NeurIPS},
  year      = {2023}
}

@inproceedings{LechnerZCH22,
  author    = {Mathias Lechner and Dorde Zikelic and Krishnendu Chatterjee and
               Thomas A. Henzinger},
  title     = {Stability Verification in Stochastic Control Systems via Neural Network
               Supermartingales},
  booktitle = {AAAI},
  year      = {2022},
  pages     = {7326--7336}
}

@article{MathiesenCL23,
  author  = {Frederik Baymler Mathiesen and Simeon C. Calvert and Luca Laurenti},
  title   = {Safety Certification for Stochastic Systems via Neural Barrier Functions},
  journal = {IEEE Control Systems Letters},
  volume  = {7},
  pages   = {973--978},
  year    = {2023}
}

@inproceedings{ChatterjeeHLZ23,
  author    = {Krishnendu Chatterjee and Thomas A. Henzinger and
               Mathias Lechner and Dorde Zikelic},
  title     = {A Learner-Verifier Framework for Neural Network Controllers
               and Certificates of Stochastic Systems},
  booktitle = {TACAS},
  year      = {2023},
  pages     = {3--25}
}

@inproceedings{SubramanianKRB24,
 title={Inductive generalization in reinforcement learning from specifications},
  author={Subramanian, Vignesh and Kushwah, Rohit and Roy, Subhajit and Bansal, Suguman},
  booktitle={International Symposium on Automated Technology for Verification and Analysis},
  pages={277--298},
  year={2025},
  organization={Springer}
}

@article{schulman2017proximal,
  author  = {John Schulman and Filip Wolski and Prafulla Dhariwal and
             Alec Radford and Oleg Klimov},
  title   = {Proximal policy optimization algorithms},
  journal = {arXiv preprint arXiv:1707.06347},
  year    = {2017}
}

@article{brockman2016openai,
  author  = {Greg Brockman and Vicki Cheung and Ludwig Pettersson and
             Jonas Schneider and John Schulman and Jie Tang and Wojciech Zaremba},
  title   = {OpenAI Gym},
  journal = {arXiv preprint arXiv:1606.01540},
  year    = {2016}
}

@article{zintgraf2021varibad,
  author  = {Luisa Zintgraf and Sebastian Schulze and Cong Lu and
             Leo Feng and Maximilian Igl and Kyriacos Shiarlis and
             Yarin Gal and Katja Hofmann and Shimon Whiteson},
  title   = {VariBAD: Variational Bayes-adaptive deep RL via meta-learning},
  journal = {Journal of Machine Learning Research},
  volume  = {22},
  number  = {289},
  pages   = {1--39},
  year    = {2021}
}

@article{nagabandi2019deep,
  author  = {Anusha Nagabandi and Chelsea Finn and Sergey Levine},
  title   = {Deep online learning via meta-learning: continual adaptation for model-based RL},
  journal = {International Conference on Learning Representations},
  year    = {2019}
}

@inproceedings{inala2020synthesizing,
  author    = {Jeevana Priya Inala and Osbert Bastani and Zenna Tavares and
               Armando Solar-Lezama},
  title     = {Synthesizing Programmatic Policies That Inductively Generalize},
  booktitle = {International Conference on Learning Representations},
  year      = {2020}
}

@inproceedings{finn2017model,
  author    = {Chelsea Finn and Pieter Abbeel and Sergey Levine},
  title     = {Model-Agnostic Meta-Learning for Fast Adaptation of Deep Networks},
  booktitle = {International Conference on Machine Learning (ICML)},
  pages     = {1126--1135},
  year      = {2017}
}

@article{naderian2021c,
  author  = {Panteha Naderian and Gabriel Loaiza-Ganem and Harry J. Braviner and
             Anthony L. Caterini and Jesse C. Cresswell and Tong Li and Animesh Garg},
  title   = {C-learning: Horizon-aware cumulative accessibility estimation},
  journal = {International Conference on Learning Representations},
  year    = {2021}
}

@article{hochreiter1997long,
  author  = {Sepp Hochreiter},
  title   = {Long Short-term Memory},
  journal = {Neural Computation},
  year    = {1997}
}

@article{beck2023survey,
  author  = {Jacob Beck and Risto Vuorio and Evan Zheran Liu and
             Zheng Xiong and Luisa Zintgraf and Chelsea Finn and Shimon Whiteson},
  title   = {A survey of meta-reinforcement learning},
  journal = {arXiv preprint arXiv:2301.08028},
  year    = {2023}
}

@article{andrychowicz2017hindsight,
  author  = {Marcin Andrychowicz and Filip Wolski and Alex Ray and
             Jonas Schneider and Rachel Fong and Peter Welinder and
             Bob McGrew and Josh Tobin and Pieter Abbeel and Wojciech Zaremba},
  title   = {Hindsight Experience Replay},
  journal = {Advances in Neural Information Processing Systems},
  volume  = {30},
  year    = {2017}
}
